\newtheorem{theorem}{Theorem}[section]
\newtheorem{proposition}{Proposition}[section]
\newtheorem*{theorem*}{Theorem}
\newtheorem*{proposition*}{Proposition}
\newtheorem*{assumption*}{Assumption}
\newtheorem{corollary}{Corollary}[theorem]
\newtheorem{lemma}[theorem]{Lemma}
\newtheorem*{remark}{Remark}
\theoremstyle{definition}
\newtheorem{definition}{Definition}[section]
\theoremstyle{assumption}
\newtheorem{assumption}{Assumption}[section]
\DeclareMathOperator*{\argmax}{arg\,max}
\DeclareMathOperator*{\argmin}{arg\,min}
\newcommand{\xclean}[0]{h}
\newcommand{\yclean}[0]{y}
\newcommand{\xmap}[0]{{\hat{h}}}
\newcommand{\hmap}[0]{{\hat{h}}}
\newcommand{\ymap}[0]{{\hat{y}}}
\newcommand{\zmap}[0]{{\hat{z}}}
\newcommand{\hardy}[0]{{y}}
\newcommand{\adarelu}[0]{{\sigma_{\text{AdaReLU}}}}
\newcommand{\adapool}[0]{{\sigma_{\text{AdaPool}}}}
\newcommand{\relu}[0]{{\sigma_{\text{ReLU}}}}
\newcommand{\pool}[0]{{\sigma_{\text{MaxPool}}}}
\newcommand*\norm[1]{\overline{#1}}
\newcommand{\const}[0]{{\text{const.}}}
\definecolor{purple}{HTML}{7851A9}
\title{Neural Networks with Recurrent\\ Generative Feedback}
\author{
	Yujia Huang$^1$~~~James Gornet$^1$~~~Sihui Dai$^1$~~~Zhiding Yu$^2$~~~Tan Nguyen$^3$
	\\
	\textbf{Doris Y. Tsao}$^1$~~~\textbf{Anima Anandkumar}$^{1,2}$
	\\
	\and
	$^1$California Institute of Technology~~~$^2$NVIDIA~~~$^3$Rice University
}
\begin{document}

\setlength{\abovedisplayskip}{1.5pt}
\setlength{\belowdisplayskip}{1.5pt}

\maketitle

\begin{abstract}
Neural networks are vulnerable to input perturbations such as additive noise and adversarial attacks. In contrast, human perception is much more robust to such perturbations. The Bayesian brain hypothesis states that human brains use an internal generative model to update the posterior beliefs of the sensory input. This mechanism can be interpreted as a form of self-consistency between the maximum a posteriori (MAP) estimation of an internal generative model and the external environment. Inspired by such hypothesis, we enforce self-consistency in neural networks by incorporating generative recurrent feedback. We instantiate this design on convolutional neural networks (CNNs). The proposed framework, termed Convolutional Neural Networks with Feedback (CNN-F), introduces a generative feedback with latent variables to existing CNN architectures, where consistent predictions are made through alternating MAP inference under a Bayesian framework. In the experiments, CNN-F shows considerably improved adversarial robustness over conventional feedforward CNNs on standard benchmarks.
\end{abstract}

\section{Introduction}

\begin{wrapfigure}{R}{0.4\textwidth} 
  \vspace{-1cm}
  \begin{center}
    \includegraphics[width=0.4\textwidth]{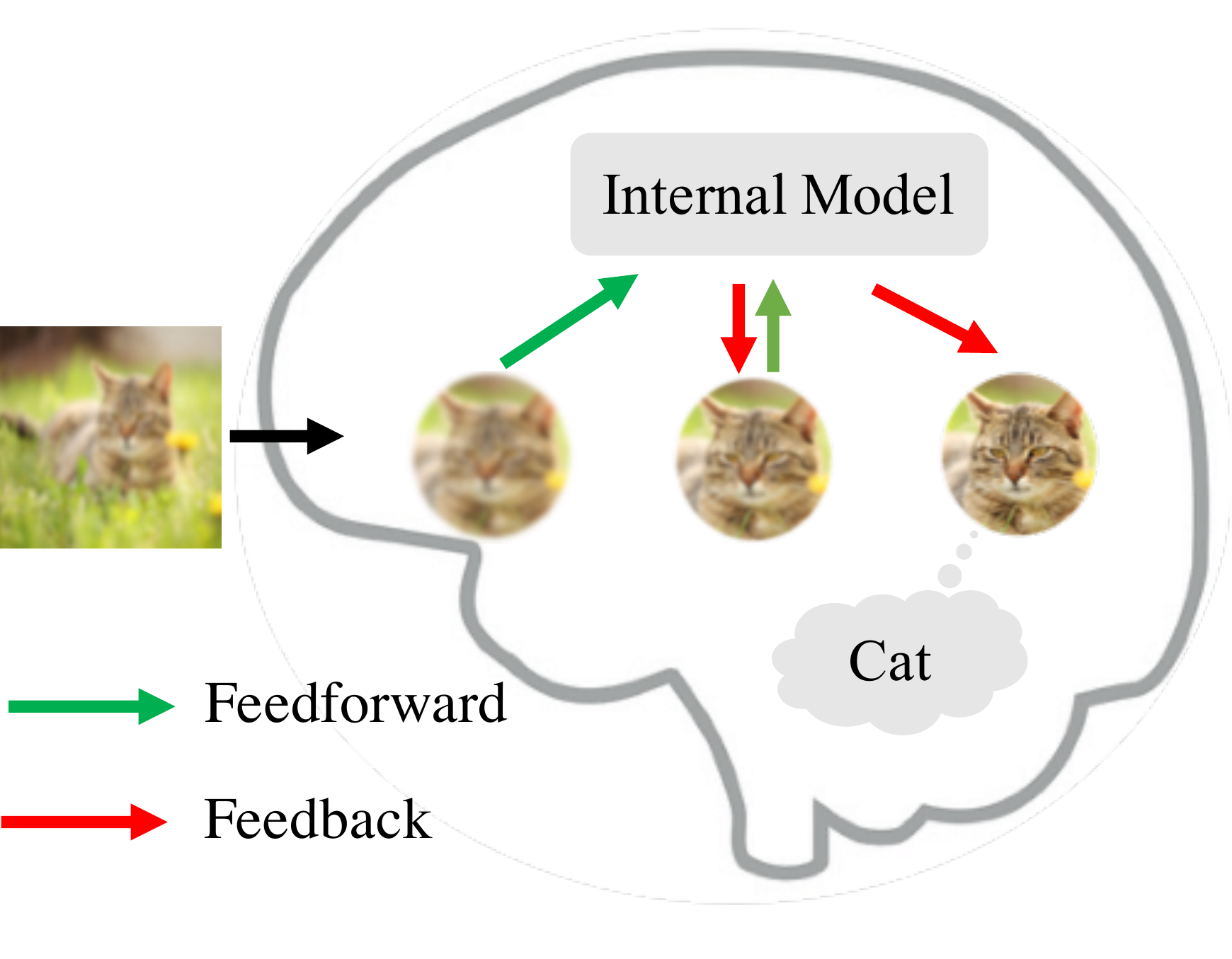}
  \end{center}
  \caption{An intuitive illustration of recurrent generative feedback in human visual perception system. 
  \label{fig:intuition}
  }
  \vspace{-0.3cm}
\end{wrapfigure}

\textbf{Vulnerability in feedforward neural networks} 
Conventional deep neural networks (DNNs) often contain many layers of feedforward connections. With the ever-growing network capacities and representation abilities, they have achieved great success. For example, recent convolutional neural networks (CNNs) have impressive accuracy on  large scale image classification benchmarks~\cite{szegedy_rethinking_2015}. However, current CNN models also have significant limitations. For instance, they can suffer significant performance drop from corruptions which barely influence human recognition~\cite{dodgedistortionperf}. Studies also show that CNNs can be misled by imperceptible noise known as adversarial attacks~\cite{szegedy2013intriguing}.

\textbf{Feedback in the human brain}
To address the weaknesses of CNNs, we can take inspiration from of how human visual recognition works, and incorporate certain mechanisms into the CNN design. While human visual cortex has hierarchical feedforward connections, backward connections from higher level to lower level cortical areas are something that current artificial networks are lacking~\cite{felleman_distributed_1991}. Studies suggest these backward connections carry out top-down processing which improves the representation of sensory input \cite{kok2012less}. In addition, evidence suggests recurrent feedback in the human visual cortex is crucial for robust object recognition. For example, humans require recurrent feedback to recognize challenging images~\cite{kar2019evidence}. Obfuscated images can fool humans without recurrent feedback~\cite{elsayed2018adversarial}.
Figure \ref{fig:intuition} shows an intuitive example of recovering a sharpened cat from a blurry cat and achieving consistent predictions after several iterations.

\textbf{Predictive coding and generative feedback}
Computational neuroscientists speculate that Bayesian inference models human perception \cite{knill1996perception}.
One specific formulation of predictive coding assumes Gaussian distributions on all variables and performs hierarchical Bayesian inference using recurrent, generative feedback pathways \cite{rao_predictive_1999}.
The feedback pathways encode predictions of lower level inputs, and the residual errors are used recurrently to update the predictions.
In this paper, we extend the principle of predictive coding to explicitly incorporate Bayesian inference in neural networks via generative feedback connections. Specifically, we adopt a recently proposed model, named the Deconvolutional Generative Model (DGM) \cite{NRM}, as the generative feedback. The DGM introduces hierarchical latent variables to capture variation in images, and generates images from a coarse to fine detail using deconvolutional operations. 

Our contributions are as follows:

\textbf{Self-consistency}
We introduce generative feedback to neural networks and propose the self-consistency formulation for robust perception. 
Our internal model of the world reaches a self-consistent representation of an external stimulus. 
Intuitively, self-consistency says that given any two elements of label, image and auxillary information, we should be able to infer the other one.
Mathematically, we use a generative model to describe the joint distribution of labels, latent variables and input image features. 
If the MAP estimate of each one of them are consistent with the other two, we call a label, a set of latent variables and image features to be self-consistent (Figure \ref{fig:wrapconsis}).

\textbf{CNN with Feedback (CNN-F)}
We incorporate generative recurrent feedback modeled by the DGM into CNN and term this model as CNN-F. We show that Bayesian inference in the DGM is achieved by CNN with adaptive nonlinear operators (Figure \ref{fig:CNN-F_diagram}). We impose self-consistency in the CNN-F by iterative inference and online update. 
Computationally, this process is done by propagating along the feedforward and feedback pathways in the CNN-F iteratively (Figure \ref{fig:arch}).

\textbf{Adversarial robustness}
We show that the recurrent generative feedback in CNN-F promotes robustness and visualizes the behavior of CNN-F over iterations. 
We find that more iterations are needed to reach self-consistent prediction for images with larger perturbation, indicating that recurrent feedback is crucial for recognizing challenging images.
When combined with adversarial training, CNN-F further improves adversarial robustness of CNN on both Fashion-MNIST and CIFAR-10 datasets.
Code is available at \url{https://github.com/yjhuangcd/CNNF}.

\begin{figure}[b]
\vspace{-0.4cm}
\centering
  \includegraphics[width=\linewidth]{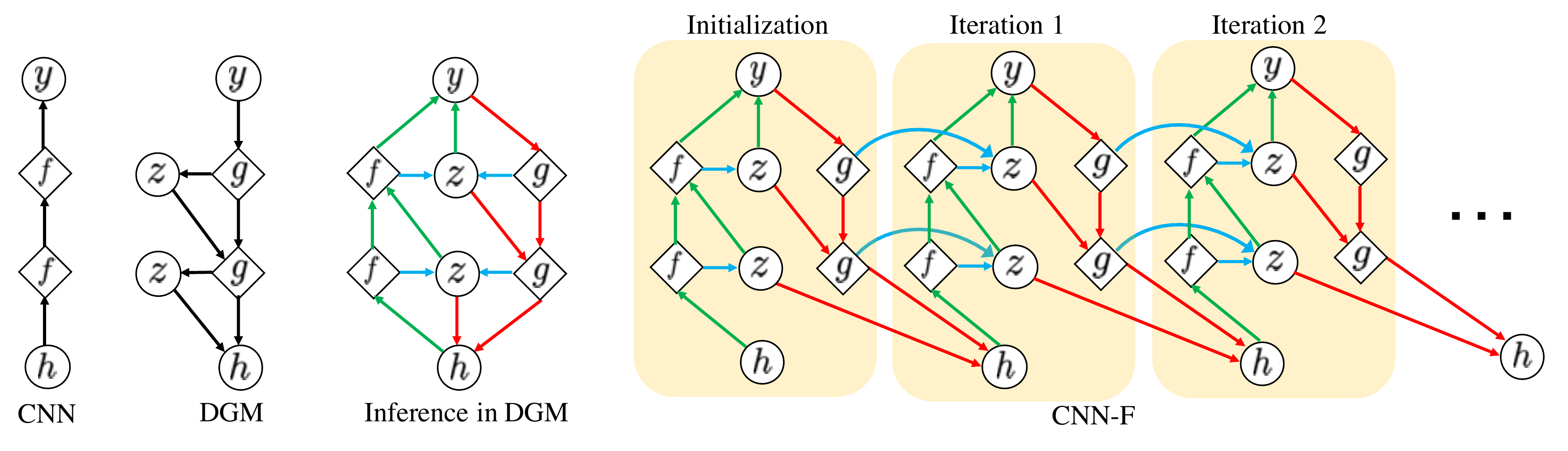}
  \caption{\textbf{Left: CNN, Graphical model for the DGM and the
      inference network for the DGM.} We use the DGM to as the generative model for the joint distribution of image features $h$, labels $y$ and  latent variables $z$. MAP inference for $h$, $y$ and $z$ is denoted in red, green and blue respectively. $f$ and $g$ denotes feedforward features and feedback features respectively.
      \textbf{Right: CNN with feedback (CNN-F)}. CNN-F performs alternating MAP inference via recurrent feedforward and feedback pathways to enforce self-consistency.}
  \label{fig:CNN-F_diagram}
\end{figure}
\newcommand{\R}{\mathbb{R}}
\newcommand{\x}{\times}
\newcommand{\N}{\mathcal{N}}
\newenvironment{subtheorems}
 {\itemize[
   nosep,font=\normalfont\bfseries,
   leftmargin=3em,itemindent=-1em,align=left]}
 {\enditemize}
\newenvironment{subassumptions}
 {\itemize[
   nosep,font=\normalfont\bfseries,
   leftmargin=3em,itemindent=-1em,align=left]}
 {\enditemize}

\section{Approach}

\begin{figure}[t!]
    \centering
    \includegraphics[width=\textwidth]{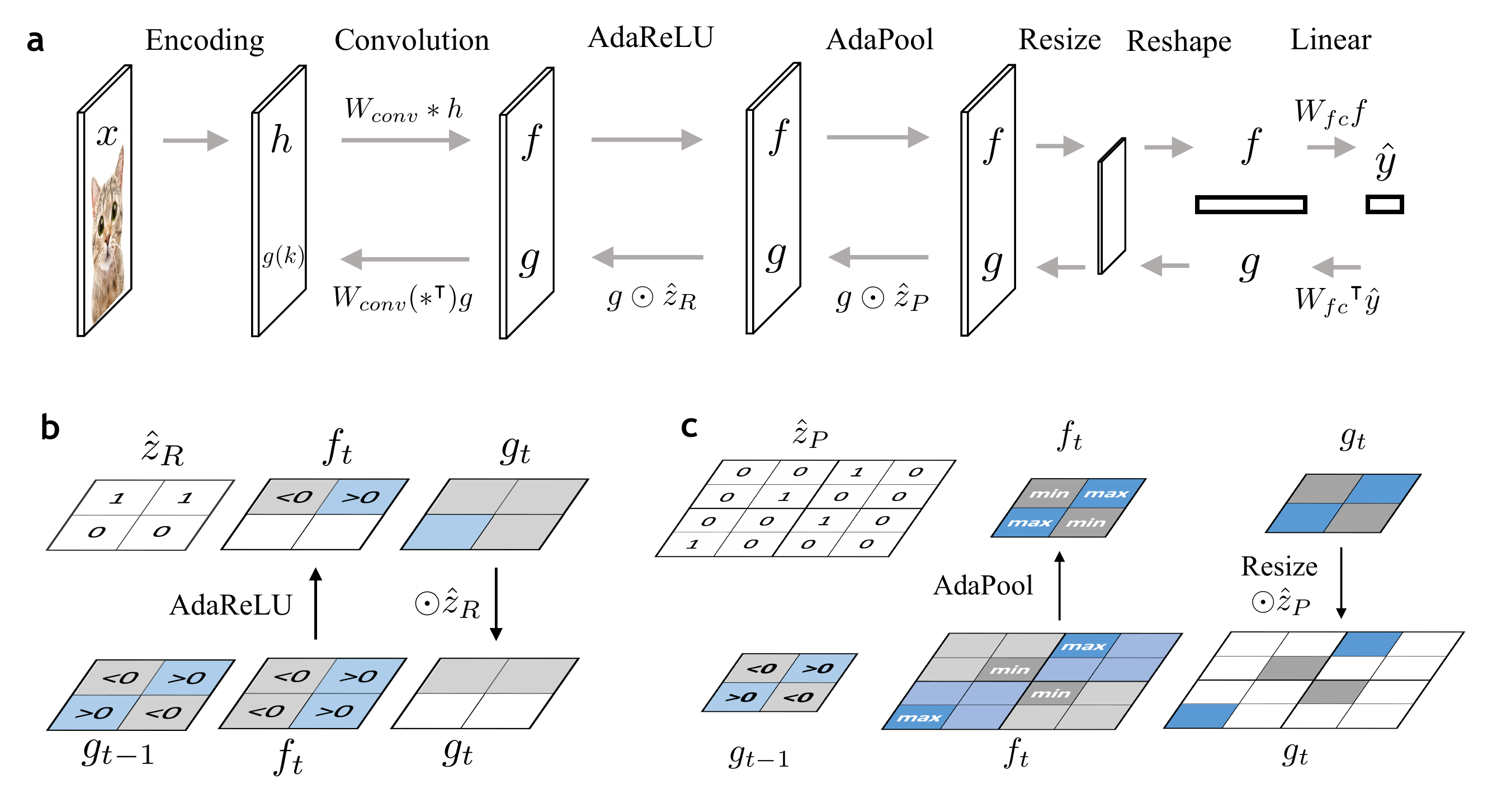}\\
    \vspace{-0.1cm}
    \caption{\textbf{Feedforward and feedback pathway in CNN-F.} a) $\ymap$ and $\zmap$ are computed by the feedforward pathway and  $\hmap$ is computed from the feedback pathway. b) Illustration of the AdaReLU operator. c) Illustration of the AdaPool operator.}
    \label{fig:arch}
    \vspace{-0.4cm}
\end{figure}

In this section, we first formally define self-consistency. Then we give a specific form of generative feedback in CNN and impose self-consistency on it. We term this model as CNN-F. Finally we show the training and testing procedure in CNN-F. Throughout, we use the following notations:

Let $x \in \mathbb{R}^n$ be the input of a network and $y \in \mathbb{R}^K$ be the output. In image classification, $x$ is image and $y=(y^{(1)},\dots,y^{(K)})$ is one-hot encoded label. $K$ is the total number of classes. $K$ is usually much less than $n$. We use $L$ to denote the total number of network layers, and index the input layer to the feedforward network as layer $0$. Let $h\in \mathbb{R}^m$ be encoded feature of $x$ at layer $k$ of the feedforward pathway.
Feedforward pathway computes feature map $f(\ell)$ from layer $0$ to layer $L$, and feedback pathway generates $g(\ell)$ from layer $L$ to $k$. 
$g(\ell)$ and $f(\ell)$ have the same dimensions. To generate $h$ from $y$, we introduce latent variables for each layer of CNN. Let $z(\ell) \in \mathbb{R}^{C \times H \times W}$ be latent variables at layer $\ell$, where $C, H, W$ are the number of channels, height and width for the corresponding feature map.  
Finally, $p(h,y,z;\theta)$ denotes the joint distribution parameterized by $\theta$, where $\theta$ includes the weight $W$ and bias term $b$ of convolution and fully connected layers. We use $\hmap$, $\ymap$ and $\zmap$ to denote the MAP estimates of $h,y,z$ conditioning on the other two variables.

\subsection{Generative feedback and Self-consistency}
\begin{wrapfigure}{R!}{0.3\textwidth}
\vspace{-1.5cm}
  \begin{center} 
    \includegraphics[width=0.3\textwidth]{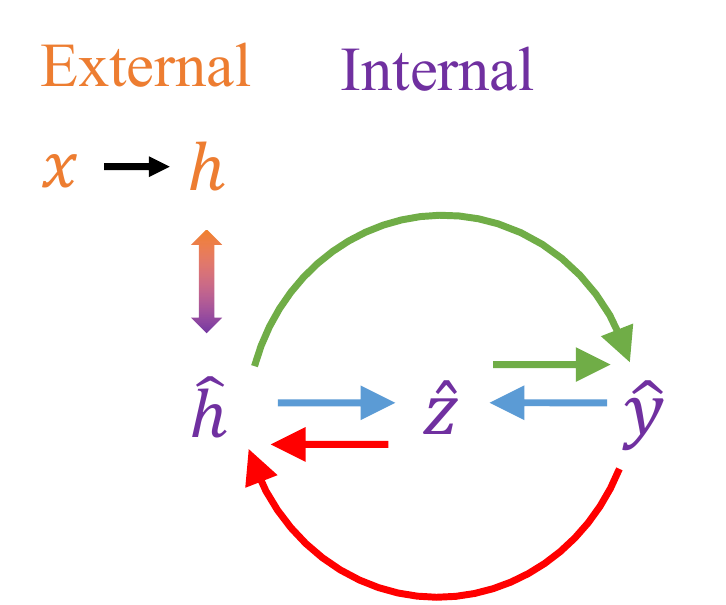}
  \end{center}
  \caption{Self-consistency among $\hmap,\zmap,\ymap$ and consistency between $\hmap$ and $h$.}
   \label{fig:wrapconsis}
   \vspace{-0.5cm}
\end{wrapfigure}

Human brain and neural networks are similar in having a hierarchical structure. 
In human visual perception, external stimuli are first preprocessed by lateral geniculate nucleus (LGN) and then sent to be processed by V1, V2, V4 and Inferior Temporal (IT) cortex in the ventral cortical visual system. Conventional NN use feedforward layers to model this process and learn a one-direction mapping from input to output. However, numerous studies suggest that in addition to the feedforward connections from V1 to IT, there are feedback connections among these cortical areas \cite{felleman_distributed_1991}. 

Inspired by the Bayesian brain hypothesis and the predictive coding theory, we propose to add generative feedback connections to NN. Since $h$ is usually of much higher dimension than $y$, we introduce latent variables $z$ to account for the information loss in the feedforward process.
We then propose to model the feedback connections as MAP estimation from an internal generative model that describes the joint distribution of $h,z$ and $y$. Furthermore, we realize recurrent feedback by imposing self-consistency (Definition \ref{dfn: selfconsis}).

\begin{definition} (Self-consistency) \label{dfn: selfconsis}
Given a joint distribution $p(h,y,z;\theta)$ parameterized by $\theta$, $(\xmap, \ymap, \zmap)$ are self-consistent if they satisfy the following constraints:
\begin{align} \label{eqn:selfconsis}
    \ymap = \argmax_y p(y|\xmap,\zmap), \qquad  
    \hmap = \argmax_h p(h|\ymap,\zmap), \qquad   
    \zmap = \argmax_z p(z|\xmap,\ymap) 
\end{align}
\end{definition}

In words, self-consistency means that MAP estimates from an internal generative model are consistent with each other. In addition to self-consistency, we also impose the consistency constraint between $\xmap$ and the external input features (Figure \ref{fig:wrapconsis}). We hypothesize that for \textit{easy} images (familiar images to human, clean images in the training dataset for NN), the $\ymap$ from the first feedforward pass should automatically satisfy the self-consistent constraints. Therefore, feedback need not be triggered. For \textit{challenging} images (unfamiliar images to human, unseen perturbed images for NN), recurrent feedback is needed to obtain self-consistent $(\xmap, \ymap, \zmap)$ and to match $\xmap$ with $h$. Such recurrence resembles the dynamics in neural circuits \cite{kietzmann2019recurrence} and the extra effort to process challenging images \cite{kar2019evidence}.

\subsection{Generative Feedback in CNN-F} \label{sec:DGM}
CNN have been used to model the hierarchical structure of human retinatopic fields~\cite{eickenberg2017seeing, horikawa2017hierarchical}, and have achieved state-of-the-art performance in image classification. Therefore, we introduce generative feedback to CNN and impose  self-consistency on it. We term the resulting model as CNN-F.

We choose to use the DGM \cite{NRM} as generative feedback in the CNN-F. The DGM introduces hierarchical binary latent variables and generates images from coarse to fine details. The generation process in the DGM is shown in Figure \ref{fig:arch} (a). First, $y$ is sampled from the label distribution. Then each entry of $z(\ell)$ is sampled from a Bernoulli distribution parameterized by $g(\ell)$ and a bias term $b(\ell)$. $g(\ell)$ and $z(\ell)$ are then used to generate the layer below:
\begin{equation} \label{eqn:feedback}
    g(\ell-1)=W(\ast^\intercal)(\ell)(z(\ell)\odot g(\ell))
\end{equation}
In this paper, we assume $p(y)$ to be uniform, which is realistic under the balanced label scenario. We assume that $h$ follows Gaussian distribution centered at $g(k)$ with standard deviation $\sigma$. 

\subsection{Recurrence in CNN-F} \label{sec:mapinfer}
In this section, we show that self-consistent $(\hmap, \ymap, \zmap)$ in the DGM can be obtained via alternately propagating along feedforward and feedback pathway in CNN-F.

\paragraph{Feedforward and feedback pathway in CNN-F}
The feedback pathway in CNN-F takes the same form as the generation process in the DGM (Equation (\ref{eqn:feedback})). The feedforward pathway in CNN-F takes the same form as CNN except for the nonlinear operators. In conventional CNN, nonlinear operators are $\relu(f)=\max(f,0)$ and $\pool(f)=\max_{r \times r} f$, where $r$ is the dimension of the pooling region in the feature map (typically equals to $2$ or $3$). In contrast, we use $\adarelu$ and $\adapool$ given in Equation (\ref{eqn:adaoperator}) in the feedforward pathway of CNN-F. These operators adaptively choose how to activate the feedforward feature map based on the sign of the feedback feature map. The feedforward pathway computes $f(\ell)$ using the recursion $f(\ell) = W(\ell)\ast\sigma(f(\ell-1))\}+b(\ell)$ \footnote{$\sigma$ takes the form of $\adapool$ or $\adarelu$.}.
\begin{equation} \label{eqn:adaoperator}
\adarelu(f) = 
\begin{cases}
\relu(f), \quad\text{if } g \geq 0 \\
\relu(-f), \quad\text{if } g<0
\end{cases}
\quad
\adapool(f) = 
\begin{cases}
\pool(f), \quad\text{if } g \geq 0 \\
-\pool(-f), \quad\text{if } g<0
\end{cases}
\end{equation}

\paragraph{MAP inference in the DGM}
Given a joint distribution of $h,y,z$ modeled by the DGM, we aim to show that we can make predictions using a CNN architecture following the Bayes rule (Theorem \ref{thm:mainthm1}).  
To see this, first recall that generative classifiers learn a joint distribution $p(x,y)$ of input data $x$ and their labels $y$, and make predictions by computing $p(y|x)$ using the Bayes rule. 
A well known example is the Gaussian Naive Bayes model (GNB). The GNB models $p(x,y)$ by $p(y)p(x|y)$, where $y$ is Boolean variable following a Bernoulli distribution and $p(x|y)$ follows Gaussian distribution. It can be shown that $p(y|x)$ computed from GNB has the same parametric form as logistic regression. 

\begin{assumption}\label{ass:map}
{\normalfont
(Constancy assumption in the DGM).
\begin{subassumptions}
\item[A.] The generated image $g(k)$ at layer $k$ of DGM satisfies $||g(k)||_2^2=\text{const.}$
\item[B.] Prior distribution on the label is a uniform distribution: $p(y) = \text{const.}$\label{ass:uniform}
\item[C.] Normalization factor in $p(z|y)$ for each category is constant: $\sum_z e^{\eta(y,z)}=\text{const.}$\label{ass:consteta}
\end{subassumptions}
}
\end{assumption}
\begin{remark} 
{\normalfont
To meet Assumption \ref{ass:map}.A, we can normalize $g(k)$ for all $k$. This results in a form similar to the instance normalization that is widely used in image stylization \cite{ulyanov2016instance}. See Appendix \ref{sec:insnorm} for more detailed discussion. Assumption \ref{ass:map}.B assumes that the label distribution is balanced. $\eta$ in Assumption \ref{ass:map}.C is used to parameterize $p(z|y)$. See Appendix \ref{sec:detail_thm1} for the detailed form.}
\end{remark}

\begin{theorem}\label{thm:mainthm1} {\normalfont
Under Assumption \ref{ass:map} and given a joint distribution $p(h,y,z)$ modeled by the DGM, $p(y|h,z)$ has the same parametric form as a CNN with $\adarelu$ and $\adapool$.
}
\end{theorem}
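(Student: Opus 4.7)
The plan is to compute $p(y \mid h, z)$ via Bayes' rule, strip all $y$-independent factors using Assumptions \ref{ass:map}.A--C, and identify the remaining $y$-dependent logits with the pre-softmax output of a CNN whose nonlinearities are $\adarelu$ and $\adapool$. First I would expand
\begin{equation*}
p(y \mid h, z) \;=\; \frac{p(y)\,p(z \mid y)\,p(h \mid y, z)}{\sum_{y'} p(y')\,p(z \mid y')\,p(h \mid y', z)}.
\end{equation*}
Assumption \ref{ass:map}.B cancels $p(y)$; writing $p(z \mid y) = \exp(\eta(y,z))/\sum_{z'} \exp(\eta(y,z'))$ and invoking Assumption \ref{ass:map}.C removes the $y$-dependence of the partition function, so $p(y \mid h, z) \propto \exp\bigl(\eta(y,z) + \log p(h \mid y, z)\bigr)$.

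Next I would handle the Gaussian likelihood. Since $h \mid y,z \sim \mathcal{N}(g(k),\sigma^2 I)$,
\begin{equation*}
\log p(h \mid y, z) \;=\; -\tfrac{1}{2\sigma^2}\bigl(\|h\|^2 - 2\langle h, g(k)\rangle + \|g(k)\|^2\bigr) + \text{const}.
\end{equation*}
The $\|h\|^2$ term is $y$-independent and Assumption \ref{ass:map}.A makes $\|g(k)\|^2$ constant, so only the cross-term $\langle h, g(k)\rangle/\sigma^2$ survives inside the softmax over $y$.

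The algebraic heart of the proof is to unroll $\langle h, g(k)\rangle$ using the adjoint relation between convolution and transposed convolution. Each DGM layer $g(\ell-1) = W^{\ast\top}(\ell)(z(\ell)\odot g(\ell))$ gives
\begin{equation*}
\langle u, W^{\ast\top}(\ell)(z(\ell)\odot v)\rangle \;=\; \langle z(\ell)\odot(W(\ell)\ast u), v\rangle,
\end{equation*}
so peeling layer by layer from $k$ up to $L$ rewrites each transposed convolution as a convolution acting on the left argument and leaves a binary gate $z(\ell)\odot$ sitting on the accumulating feedforward feature. Taking $g(L)$ to be linear in the one-hot $y$, a final adjoint step produces $\langle h, g(k)\rangle = \langle f(L), y\rangle$, exposing a $K$-dimensional logit vector $f(L)$ of the form that a CNN with appropriate nonlinearities would compute from $h$. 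The identification of the binary gate $z(\ell)\odot$ with $\adarelu$ (and, at pooling layers, $\adapool$) then uses the Bernoulli form of $p(z \mid y)$ from Section \ref{sec:DGM}: the $\eta(y,z)$ contribution combines with the Gaussian cross term so that the resulting per-layer operation matches the sign-selection logic in Equation~(\ref{eqn:adaoperator}).

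The main obstacle will be this last matching step: verifying that the combined logit $\eta(y,z) + \langle h, g(k)\rangle/\sigma^2$ truly reproduces the output of a CNN-F, rather than just a CNN with some pre-specified binary gating pattern. Layer by layer one must check that the Bernoulli log-density of $p(z\mid y)$ supplies exactly the bookkeeping needed to upgrade the passive binary gate into the adaptive $\adarelu$, and that the pooling layers are handled by the corresponding unpooling in the DGM so that $\adapool$, with its characteristic sign flip, emerges correctly from the adjoint. The Bayes-rule and Gaussian-expansion steps, by contrast, are routine manipulations made possible by the three constancy assumptions.
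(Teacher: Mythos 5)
Your proposal follows essentially the same route as the paper: Bayes' rule plus Assumptions \ref{ass:map}.A--C to reduce $\log p(y|h,z)$ to $\tfrac{1}{\sigma^2}\langle h, g(k)\rangle + \eta(y,z) + \const$, and then a layer-by-layer adjoint unrolling of the transposed convolutions --- which is precisely the paper's Lemma \ref{lem:lemma1} --- to identify this quantity with $\alpha\, y^\intercal v$ for the logits $v$ of a CNN with $\adarelu$ and $\adapool$. The ``matching step'' you flag (that the gates $z(\ell)\odot$ together with the bias terms from $\eta(y,z)$ reproduce the adaptive operators of Equation (\ref{eqn:adaoperator})) is exactly what the paper's lemma handles, so the plan is correct and aligned with the paper's proof.
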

\begin{proof}
Please refer to Appendix \ref{sec:detail_thm1}.
\end{proof}
\begin{remark} 
{\normalfont
Theorem \ref{thm:mainthm1} says that DGM and CNN is a generative-discriminative pair in analogy to GNB and logistic regression.}
\end{remark}

We also find the form of MAP inference for image feature $\hmap$ and latent variables $\zmap$ in the DGM. Specifically, we use $z_R$ and $z_P$ to denote latent variables that are at a layer followed by $\operatorname{AdaReLU}$  and  $\operatorname{AdaPool}$ respectively. $\mathbbm{1}{(\cdot)}$ denotes indicator function.  
\begin{proposition} [MAP inference in the DGM]  {\normalfont Under Assumption \ref{ass:map}, the following hold:}
\begin{subtheorems} \label{thm:mainthm2}
{\normalfont 
\item[A. ] Let $h$ be the feature at layer $k$, then $\hmap = g(k)$.
\item[B. ] 
MAP estimate of $z(\ell)$ conditioned on $h, y$ and $\{z(j)\}_{j \neq \ell}$ in the DGM is:
\begin{align} 
\zmap_{R}(\ell) &= \mathbbm{1}{(\adarelu(f(\ell)) \geq 0)}  \label{eqn:mainlatentr} \\
\zmap_{P}(\ell) &= \mathbbm{1}{(g(\ell) \geq 0)}\odot \argmax_{r\times r} (f(\ell)) 
 + \mathbbm{1}{(g(\ell)<0)}\odot \argmin_{r\times r} (f(\ell)) \label{eqn:mainlatentp}
\end{align}
}
\end{subtheorems}
\vspace{-0.5cm}
\end{proposition}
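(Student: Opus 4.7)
The plan is to dispatch Part A directly from the Gaussian observation model, and, for Part B, to reduce MAP inference of $z(\ell)$ to a linear inner-product maximization against the feedforward feature $f(\ell)$ that can then be solved coordinate-wise (or pool-window-wise). For Part A, $p(h\mid y,z) = \mathcal{N}(g(k),\sigma^2 I)$ by construction, so its maximizer in $h$ is the mean, giving $\hmap = g(k)$ immediately, with no further work.

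For Part B, Bayes' rule gives $p(z(\ell)\mid h, y, z_{-\ell}) \propto p(h\mid y, z)\,p(z(\ell)\mid y, z_{-\ell})$. Assumption~\ref{ass:map}.C makes the prior factor a constant in $z(\ell)$ once $y$ and the other latents $z_{-\ell}$ are fixed, so it suffices to maximize $p(h\mid y,z)$. Expanding the Gaussian log-likelihood and invoking Assumption~\ref{ass:map}.A to discard $\|g(k)\|^2$, the objective reduces to $\langle h, g(k)\rangle$ viewed as a function of $z(\ell)$. The crux is then to rewrite
\[
\langle h, g(k)\rangle \;=\; \langle f(\ell),\, z(\ell)\odot g(\ell)\rangle + \const
\]
by induction on the number of layers strictly between $k$ and $\ell$: I would unroll the DGM recursion $g(j-1)=W(\ast^\intercal)(j)\bigl(z(j)\odot g(j)\bigr)$ and use that, when the upper latents $z(j)$ are held fixed, the feedforward convolutions together with $\adarelu$/$\adapool$ implement exactly the adjoints of the DGM's transposed convolutions and Hadamard masks. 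With that identity in hand the problem splits into independent subproblems: for $\adarelu$ layers, each binary $z_R(\ell,i)\in\{0,1\}$ should be $1$ iff $f(\ell,i)\,g(\ell,i)\ge 0$, and a case split on $\operatorname{sign}(g(\ell,i))$ rewrites this as $\mathbbm{1}\bigl(\adarelu(f(\ell))_i \geq 0\bigr)$, giving \eqref{eqn:mainlatentr}; for $\adapool$ layers, $z_P(\ell)$ is a one-hot selector inside each $r\times r$ window, so maximizing $g(\ell,i)\,f(\ell,i)$ over the selected position returns $\argmax_{r\times r} f(\ell)$ on windows where $g(\ell)\ge 0$ and $\argmin_{r\times r} f(\ell)$ where $g(\ell)<0$, giving \eqref{eqn:mainlatentp}.

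The only delicate step is the inductive adjoint-unrolling that produces the identity $\langle h, g(k)\rangle = \langle f(\ell),\, z(\ell)\odot g(\ell)\rangle + \const$. Conceptually it just says the feedforward pathway of CNN-F computes the adjoint of the DGM's top-down generation; making it precise requires careful bookkeeping of the Hadamard masks $z(j)$ and pool selectors at every intermediate layer $j$, and of how exactly $\adarelu$ and $\adapool$ encode these adjoints when the corresponding latents are held fixed. Everything else is routine Gaussian algebra and discrete optimization, so I expect the adjoint-unrolling to be the main technical hurdle, analogous in spirit to the unrolling used in the proof of Theorem~\ref{thm:mainthm1}.
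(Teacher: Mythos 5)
Your Part A is exactly the paper's argument and is fine. For Part B, however, there is a genuine gap at your very first reduction: you claim that Assumption \ref{ass:map}.C makes the prior factor $p(z(\ell)\mid y, z_{-\ell})$ constant in $z(\ell)$, so that it suffices to maximize the Gaussian likelihood alone. That is not what Assumption \ref{ass:map}.C says. The DGM prior on the latents is $p(z\mid y)=\operatorname{Softmax}\left(\eta(y,z)\right)$ with $\eta(y,z)=\sum_{j}\langle b(j),\, z(j)\odot g(j)\rangle$; Assumption \ref{ass:map}.C only makes the \emph{normalizer} $\sum_z e^{\eta(y,z)}$ constant, so $\log p(z\mid y)=\eta(y,z)+\const$, and $\eta(y,z)$ still depends on $z(\ell)$ (directly through $\langle b(\ell), z(\ell)\odot g(\ell)\rangle$ and through the generated maps below layer $\ell$). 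The paper's proof therefore keeps the combined objective $\tfrac{1}{\sigma^2}\, g(k)^\intercal h + \eta(y,z)$ and unrolls this \emph{sum} via Lemma \ref{lem:lemma1}; it is precisely the $\eta$ term that injects the bias $b(\ell)$ (and the intermediate biases) so that, as a function of $z(\ell)$, the objective collapses to $(z_R(\ell)\odot g(\ell))^\intercal f(\ell)$ with $f(\ell)=W(\ell)\ast f_a(\ell-1)+b(\ell)$, and analogously $(z_P(\ell)\odot g(\ell)_{\uparrow})^\intercal f(\ell)$ for pooling layers.

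Consequently your key identity $\langle h, g(k)\rangle = \langle f(\ell),\, z(\ell)\odot g(\ell)\rangle + \const$ is false whenever the biases are nonzero: dropping $\eta$, the adjoint unrolling terminates at the pre-bias map $W(\ell)\ast f_a(\ell-1)$ rather than $f(\ell)$, and your argument would conclude $\zmap_R(\ell)=\mathbbm{1}\left(\adarelu(f(\ell)-b(\ell))\geq 0\right)$ instead of Equation (\ref{eqn:mainlatentr}) (with the analogous discrepancy for Equation (\ref{eqn:mainlatentp})). The repair is small and lands you on the paper's route: write $\log p(z(\ell)\mid h,y,z_{-\ell}) = \log p(h\mid y,z)+\log p(z\mid y)+\log p(y)+\const$, use Assumptions \ref{ass:map}.B and \ref{ass:map}.C to replace the last terms by $\eta(y,z)+\const$, use Assumption \ref{ass:map}.A to reduce the Gaussian term to an inner product with $g(k)$, and then apply the unrolling identity of Lemma \ref{lem:lemma1} --- which is exactly the ``adjoint bookkeeping'' you flag as the main hurdle, already established for Theorem \ref{thm:mainthm1} --- to the sum $g(k)^\intercal h+\eta(y,z)$, not to the likelihood term alone. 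Your final coordinate-wise case split for $z_R$ and window-wise selection for $z_P$ are correct once the objective carries the bias terms.
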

\begin{proof}
    For part A, we have $\hmap = \argmax_h p(h|\ymap,\zmap) = \argmax_h p(h|g(k)) = g(k)$. The second equality is obtained because $g(k)$ is a deterministic function of $\ymap$ and $\zmap$. The third equality is obtained because $h\sim \mathcal{N}(g(k), \text{diag}(\sigma^2))$. For part B, please refer to Appendix \ref{sec:detail_thm1}.
\end{proof}

\begin{remark} 
{\normalfont
Proposition \ref{thm:mainthm2}.A show that $\hmap$ is the output of the generative feedback in the CNN-F. \\
Proposition \ref{thm:mainthm2}.B says that $\zmap_R=1$ if the sign of the feedforward feature map matches with that of the feedback feature map. $\zmap_P=1$ at locations that satisfy one of these two requirements: 1) the value in the feedback feature map is non-negative and it is the maximum value within the local pooling region or 2) the value in the feedback feature map is negative and it is the minimum value within the local pooling region.
Using Proposition \ref{thm:mainthm2}.B, we approximate $\{\zmap(\ell)\}_{\ell=1:L}$ by greedily finding the MAP estimate of $\zmap(\ell)$ conditioning on all other layers. 
}
\end{remark}

\paragraph{Iterative inference and online update in CNN-F} 
We find self-consistent $(\hmap,\ymap, \zmap)$ by iterative inference and online update (Algorithm \ref{alg:infer}). 
In the initialization step, image $x$ is first encoded to $h$ by $k$ convolutional layers. Then $h$ passes through a standard CNN, and latent variables are initialized with conventional $\relu$ and $\pool$. 
The feedback generative network then uses $\ymap_0$ and $\{\zmap_0(\ell)\}_{\ell=k:L}$ to generate intermediate features $\{g_0(\ell)\}_{\ell=k:L}$, where the subscript denotes the number of iterations. In practice, we use logits instead of one-hot encoded label in the generative feedback to maintain uncertainty in each category. 
We use $g_0(k)$ as the input features for the first iteration. Starting from this iteration, we use $\adarelu$ and $\adapool$ instead of $\relu$ and and $\pool$ in the feedforward pathway to infer $\zmap$ (Equation (\ref{eqn:mainlatentr}) and (\ref{eqn:mainlatentp})).
In practice, we find that instead of greedily replacing the input with generated features and starting a new inference iteration, online update eases the training and gives better robustness performance. The online update rule of CNN-F can be written as: 
\begin{align}
    \hmap_{t+1} & \leftarrow \hmap_t + \eta (g_{t+1}(k) - \hmap_t) \label{eqn:upd_h} \\
    f_{t+1}(\ell) & \leftarrow f_{t+1}(\ell) + \eta (g_t(\ell) - f_{t+1}(\ell)), \ell=k,\dots,L \label{eqn:upd_f}
\end{align}
where $\eta$ is the step size. Greedily replacement is a special case for the online update rule when $\eta=1$.

\begin{algorithm}[H] \label{alg:infer}
\SetAlgoLined
\SetKwInOut{Input}{Input}
\Input{~Input image $x$, number of encoding layers $k$, maximum number of iterations N.}
 Encode image $x$ to $h_0$ with $k$ convolutional layers\; 
 Initialize $\{\zmap(\ell)\}_{\ell=k:L}$ by $\relu$ and $\pool$ in the standard CNN\;
 \While{t < N}{
  Feedback pathway: generate $g_t(k)$ using $\ymap_t$ and $\zmap_{t}(\ell)$, $\ell=k,\dots,L$\;
  Feedforward pathway: 
  use $\hmap_{t+1}$ as the input (Equation (\ref{eqn:upd_h}))\; 
  \hspace{3.18cm} update each feedforward layer using Equation (\ref{eqn:upd_f})\;
  \hspace{3.18cm} predict $\ymap_{t+1}$ using the updated feedforward layers\;
 }
 \Return{$\hmap_N, \ymap_N, \zmap_N$}
 \caption{Iterative inference and online update in CNN-F}
\end{algorithm}

\subsection{Training the CNN-F}
During training, we have three goals: 1) train a generative model to model the data distribution, 2) train a generative classifier and 3) enforce self-consistency in the model. 
We first approximate self-consistent $(\hmap,\ymap,\zmap)$ and then update model parameters based on the losses listed in Table \ref{tbl:losses}. All losses are computed for every iteration. Minimizing the reconstruction loss increases data likelihood given current estimates of label and latent variables $\log p(h | \ymap_t,\zmap_t)$ and enforces consistency between $\hmap_t$ and $h$. Minimizing the cross-entropy loss helps with the classification goal. 
In addition to reconstruction loss at the input layer, we also add reconstruction loss between intermediate feedback and feedforward feature maps. These intermediate losses helps stabilizing the gradients when training an iterative model like the CNN-F.

\begin{table*}[h]\vspace{-0.5em}
\caption{Training losses in the CNN-F.}\label{tbl:losses}
\centering
\resizebox{\linewidth}{!}{
\begin{small}
\begin{tabular}{ccc}
 \toprule
 & Form & Purpose  \\
  \midrule
  Cross-entropy loss & $\log p(\yclean\,|\,\hmap_t,\zmap_t; \theta) $ & classification \\
  Reconstruction loss & $\log p(\xclean\,|\,\ymap_t,\zmap_t; \theta) = ||\xclean - \hmap||^2_2.$ & generation, self-consistency \\
  Intermediate reconstruction loss & $||f_0(\ell) - g_t(\ell)||_2^2$ & stabilizing training \\
  \bottomrule
  \end{tabular}
\end{small}
}
\end{table*}\vspace{-0.5em}

\section{Experiment}
\subsection{Generative feedback promotes robustness}
As a sanity check, we train a CNN-F model with two convolution layers and one fully-connected layer on clean Fashion-MNIST images. We expect that CNN-F reconstructs the perturbed inputs to their clean version and makes self-consistent predictions. To this end, we verify the hypothesis by evaluating adversarial robustness of CNN-F and visualizing the restored images over iterations.
\paragraph{Adversarial robustness}
Since CNN-F is an iterative model, we consider two attack methods: attacking the first or last output from the feedforward streams. We use ``first'' and ``e2e'' (short for end-to-end) to refer to the above two attack approaches, respectively. Due to the approximation of non-differentiable activation operators and the depth of the unrolled CNN-F, end-to-end attack is weaker than first attack (Appendix \ref{sec:adv_detail}). We report the adversarial accuracy against the stronger attack in Figure \ref{fig:stan-adv}.
We use the Fast Gradient Sign Attack Method (FGSM) \cite{goodfellow2014explaining} Projected Gradient Descent (PGD) method to attack. For PGD attack, we generate adversarial samples within $L_{\infty}$-norm constraint, and denote the maximum $L_{\infty}$-norm between adversarial images and clean images as $\epsilon$.

Figure \ref{fig:stan-adv} (a, b) shows that the CNN-F improves adversarial robustness of a CNN on Fashion-MNIST without access to adversarial images during training. The error bar shows standard deviation of 5 runs. Figure \ref{fig:stan-adv} (c) shows that training a CNN-F with more iterations improves robustness. Figure \ref{fig:stan-adv} (d) shows that the predictions are corrected over iterations during testing time for a CNN-F trained with 5 iterations.
Furthermore, we see larger improvements for higher $\epsilon$. This indicates that recurrent feedback is crucial for recognizing challenging images.

\begin{figure}[ht]
    \centering
    \includegraphics[width=\textwidth]{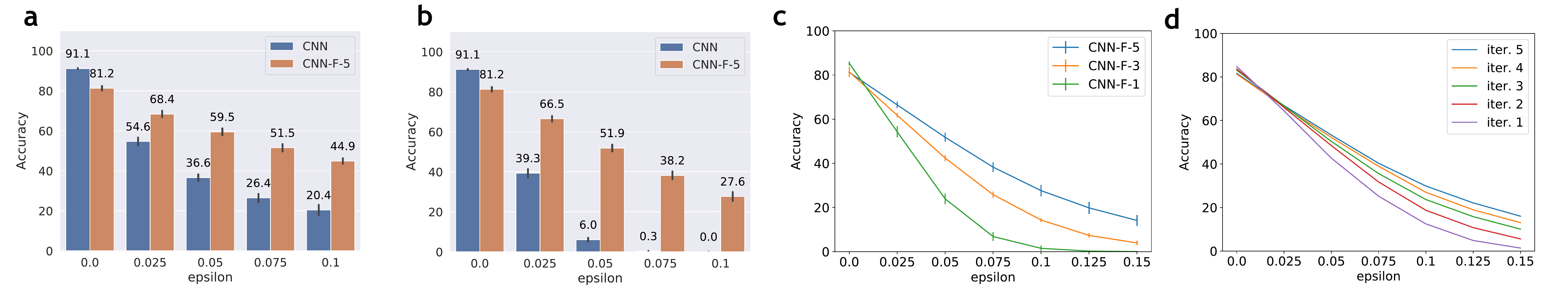}
    \caption{\textbf{Adversarial robustness of CNN-F with standard training on Fashion-MNIST.} CNN-F-$k$ stands for CNN-F trained with $k$ iterations. a) Attack with FGSM. b) Attack with PGD using 40 steps. c) Train with different number of iterations. Attack with PGD-40. d) Evaluate a trained CNN-F-$5$ model with various number of iterations against PGD-40 attack.
    }
    \label{fig:stan-adv}
\end{figure}

\paragraph{Image restoration}
Given that CNN-F models are robust to adversarial attacks, we examine the models' mechanism for robustness by visualizing how the generative feedback moves a perturbed image over iterations. We select a validation image from Fashion-MNIST. Using the image's two largest principal components, a two-dimensional hyperplane~$\subset \mathbb{R}^{28 \times 28}$ intersects the image with the image at the center. Vector arrows visualize the generative feedback's movement on the hyperplane's position. In Figure~\ref{fig:visualization} (a), we find that generative feedback perturbs samples across decision boundaries toward the validation image. This demonstrates that the CNN-F's generative feedback can restore perturbed images to their uncorrupted objects.

\begin{figure}[ht]
    \centering
    \includegraphics[width=\textwidth]{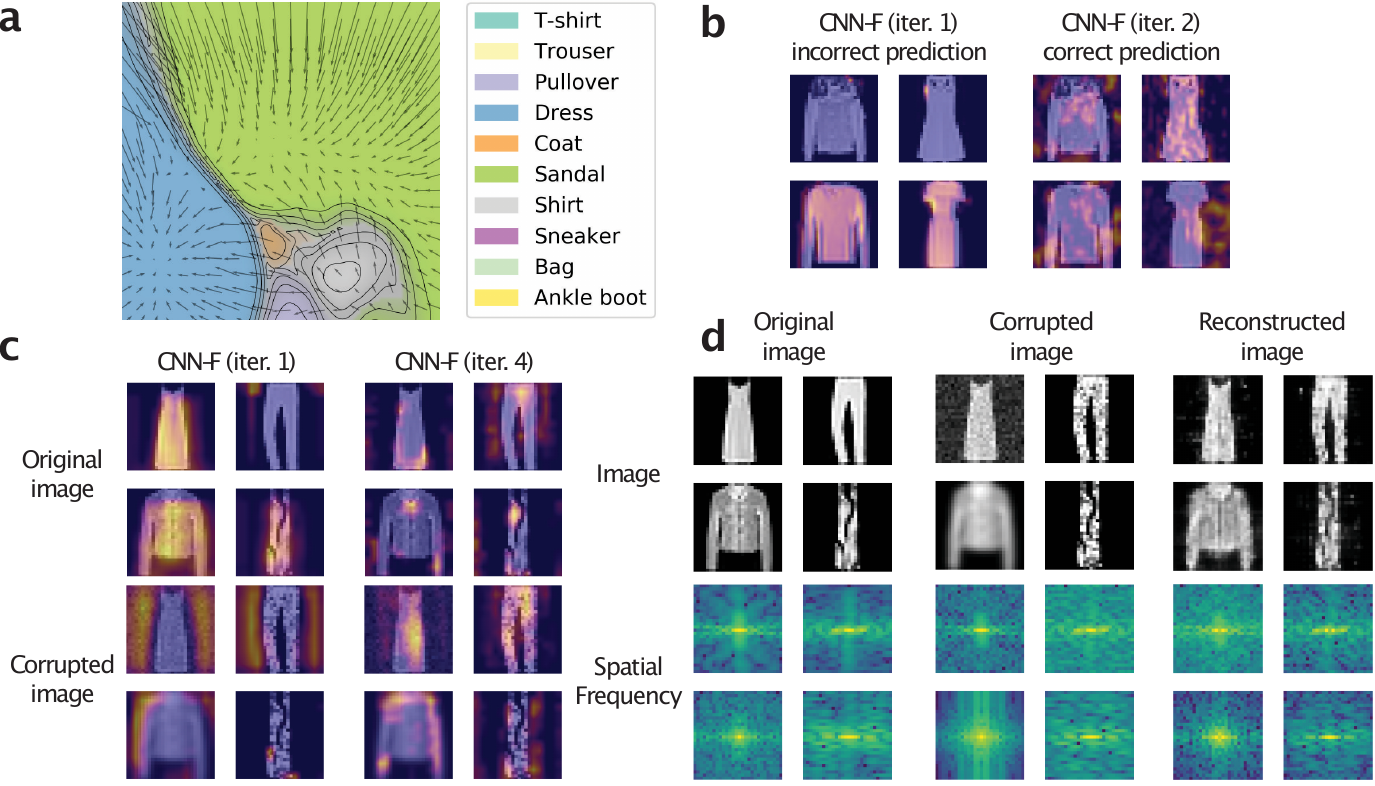}
    \caption{\textbf{The generative feedback in CNN-F models restores perturbed images.} a) The decision cell cross-sections for a CNN-F trained on Fashion-MNIST. Arrows visualize the feedback direction on the cross-section. b) Fashion-MNIST classification accuracy on PGD adversarial examples; Grad-CAM activations visualize the CNN-F model's attention from incorrect (iter. 1) to correct predictions (iter. 2). c) Grad-CAM activations across different feedback iterations in the CNN-F. d) From left to right: clean images, corrupted images, and images restored by the CNN-F's feedback.}
    \label{fig:visualization}
    \vspace{-0.2cm}
\end{figure}

We further explore this principle with regard to adversarial examples. The CNN-F model can correct initially wrong predictions. Figure~\ref{fig:visualization} (b) uses Grad-CAM activations to visualize the network's attention from an incorrect prediction to a correct prediction on PGD-40 adversarial samples~\citep{selvaraju2017grad}. To correct predictions, the CNN-F model does not initially focus on specific features. Rather, it either identifies the entire object or the entire image. With generative feedback, the CNN-F begins to focus on specific features. This is reproduced in clean images as well as images corrupted by blurring and additive noise~\ref{fig:visualization} (c). Furthermore, with these perceptible corruptions, the CNN-F model can reconstruct the clean image with generative feedback~\ref{fig:visualization} (d). This demonstrates that the generative feedback is one mechanism that restores perturbed images.

\subsection{Adversarial Training}

\begin{wrapfigure}{R}{0.3\textwidth} 
\vspace{-0.6cm}
  \begin{center}
    \includegraphics[width=0.3\textwidth]{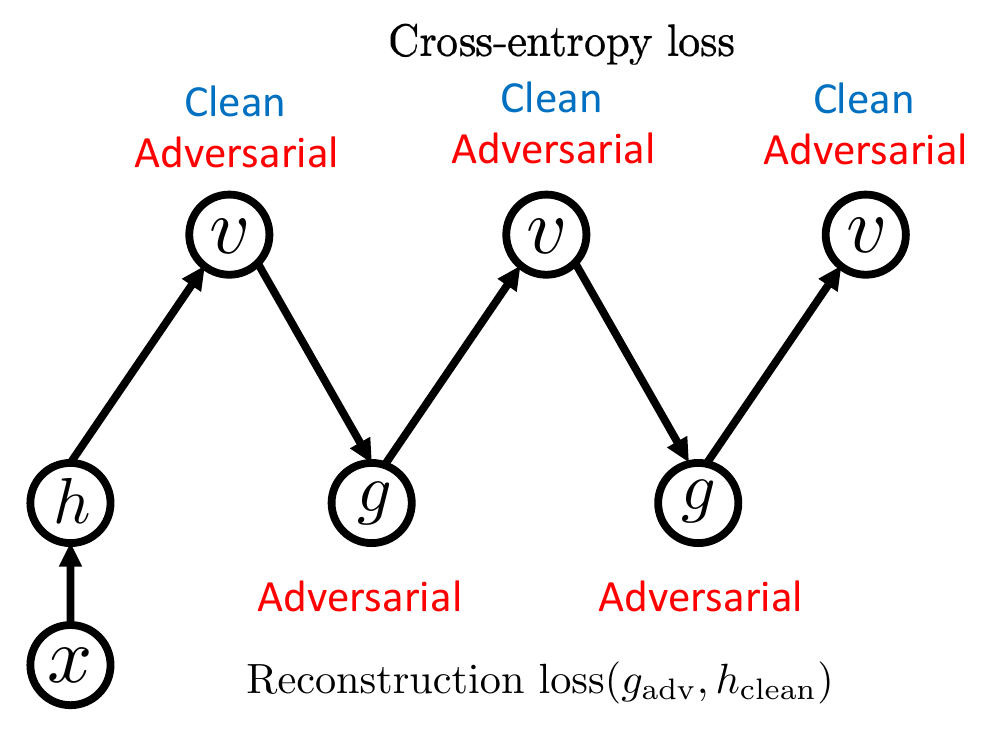}
  \end{center}
  \caption{\textbf{Loss design for CNN-F adversarial training}, where $v$ stands for the logits. $x$, $h$ and $g$ are input image, encoded feature, and generated feature, respectively.
  \label{fig:advtrainloss}
  }
 \vspace{-0.5cm}
\end{wrapfigure}
Adversarial training is a well established method to improve adversarial robustness of a neural network~\cite{madry2017towards}. Adversarial training often solves a minimax optimization problem where the attacker aims to maximize the loss and the model parameters aims to minimize the loss. In this section, we show that CNN-F can be combined with adversarial training to further improve the adversarial robustness.
\paragraph{Training methods}
Figure \ref{fig:advtrainloss} illustrates the loss design we use for CNN-F adversarial training. Different from standard adversarial training on CNNs, we use cross-entropy loss on both clean images and adversarial images. In addition, we add reconstruction loss between generated features of adversarial samples from iterative feedback and the features of clean images in the first forward pass. 
\paragraph{Experimental setup}
We train the CNN-F on Fashion-MNIST and CIFAR-10 datasets respectively. For Fashion-MNIST, we train a network with 4 convolution layers and 3 fully-connected layers. We use 2 convolutional layers to encode the image into feature space and reconstruct to that feature space. 
For CIFAR-10, we use the WideResNet architecture \cite{zagoruyko2016wide} with depth 40 and width 2. We reconstruct to the feature space after 5 basic blocks in the first network block. 
For more detailed hyper-parameter settings, please refer to Appendix \ref{sec:adv-abla-cifar}.
During training, we use PGD-7 to attack the first forward pass of CNN-F to obtain adversarial samples. 
During testing, we also perform SPSA \cite{uesato2018adversarial} and transfer attack in addition to PGD attack to prevent the gradient obfuscation \cite{athalye2018obfuscated} issue when evaluating adversarial robustness of a model. In the transfer attack, we use the adversarial samples of the CNN to attack CNN-F.
\paragraph{Main results} 
CNN-F further improves the robustness of CNN when combined with adversarial training. 
Table \ref{tbl:adv-fmnist} and Table \ref{tbl:adv-cifar} list the adversarial accuracy of CNN-F against several attack methods on Fashion-MNIST and CIFAR-10. 
On Fashion-MNIST, we train the CNN-F with 1 iterations. On CIFAR-10, we train the CNN-F with 2 iterations.
We report two evaluation methods for CNN-F: taking the logits from the last iteration (last), or taking the average of logits from all the iterations (avg).
We also report the lowest accuracy among all the attack methods with bold font to highlight the weak spot of each model. 
In general, we find that the CNN-F tends to be more robust to end-to-end attack compared with attacking the first forward pass. This corresponds to the scenario where the attacker does not have access to internal iterations of the CNN-F. Based on different attack scenarios, we can tune the hyper-paramters and choose whether averaging the logits or outputting the logits from the last iteration to get the best robustness performance (Appendix \ref{sec:adv-abla-cifar}).
\begin{table*}[h]
\caption{Adversarial accuracy on Fashion-MNIST over 3 runs. $\epsilon=0.1$.}\label{tbl:adv-fmnist}
\centering
\resizebox{\linewidth}{!}{
\begin{small}
\begin{tabular}{cccccccc}
 \toprule
 & Clean & PGD (first) & PGD (e2e) & SPSA (first) & SPSA (e2e) & Transfer & Min \\
  \midrule
 CNN & $\mathbf{89.97 \pm 0.10}$ & $77.09 \pm 0.19$ & $77.09 \pm 0.19$ & $87.33 \pm 1.14$ & $87.33 \pm 1.14$ & --- & $77.09 \pm 0.19$ \\
 CNN-F (last) & $89.87 \pm 0.14$ & $79.19 \pm 0.49$ & $78.34 \pm 0.29$ & $87.10 \pm 0.10$ & $87.33 \pm 0.89$ & $82.76 \pm 0.26$ & $78.34 \pm 0.29$ \\
 CNN-F (avg) & $89.77 \pm 0.08$ & $\mathbf{79.55 \pm 0.15}$ & $\mathbf{79.89 \pm 0.16}$ & $\mathbf{88.27 \pm 0.91}$ & $\mathbf{88.23 \pm 0.81}$ & $\mathbf{83.15 \pm 0.17}$ & $\mathbf{79.55 \pm 0.15}$ \\
  \bottomrule
  \end{tabular}
\end{small}
}
\end{table*}
\begin{table*}[h]
\caption{Adversarial accuracy on CIFAR-10 over 3 runs. $\epsilon=8/255$.}\label{tbl:adv-cifar}
\centering
\resizebox{\linewidth}{!}{
\begin{small}
\begin{tabular}{cccccccc}
 \toprule
 & Clean & PGD (first) & PGD (e2e) & SPSA (first) & SPSA (e2e) & Transfer & Min \\
  \midrule
 CNN & $79.09 \pm 0.11$ & $42.31 \pm 0.51$ & $42.31 \pm 0.51$ & $66.61 \pm 0.09$ & $\mathbf{66.61 \pm 0.09}$ & --- & $42.31 \pm 0.51$ \\
 CNN-F (last) & $78.68 \pm 1.33$ & $\mathbf{48.90 \pm 1.30}$ & $49.35 \pm 2.55$ & $68.75 \pm 1.90$ & $51.46 \pm 3.22$ & $66.19 \pm 1.37$ & $\mathbf{48.90 \pm 1.30}$ \\
 CNN-F (avg) & $\mathbf{80.27 \pm 0.69}$ & $48.72 \pm 0.64$ & $\mathbf{55.02 \pm 1.91}$ & $\mathbf{71.56 \pm 2.03}$ & $58.83 \pm 3.72$ & $\mathbf{67.09 \pm 0.68}$ & $48.72 \pm 0.64$ \\
  \bottomrule
  \end{tabular}
\end{small}
}
\end{table*}
\section{Related work}

\paragraph{Robust neural networks with latent variables}
Latent variable models are a unifying theme in robust neural networks. The consciousness prior~\cite{bengio_consciousness_2019} postulates that natural representations—such as language—operate in a low-dimensional space, which may restrict expressivity but also may facilitate rapid learning. If adversarial attack introduce examples outside this low-dimensional manifold, latent variable models can map these samples back to the manifold. A related mechanism for robustness is state reification~\cite{lamb_state-reification_2019}. Similar to self-consistency, state reification models the distribution of hidden states over the training data. It then maps less likely states to more likely states. MagNet and Denoising Feature Matching introduce similar mechanisms: using autoencoders on the input space to detect adversarial examples and restore them in the input space~\cite{meng_magnet_2017, warde-farley+al-2017-denoisegan-iclr}. Lastly, Defense-GAN proposes a generative adversarial network to approximate the data manifold~\cite{samangouei2018defense}. CNN-F generalizes these themes into a Bayesian framework. Intuitively, CNN-F can be viewed as an autoencoder. In contrast to standard autoencoders, CNN-F requires stronger constraints through Bayes rule. CNN-F—through self-consistency—constrains the generated image to satisfy the \textit{maximum a posteriori} on the predicted output.

\paragraph{Computational models of human vision}
Recurrent models and Bayesian inference have been two prevalent concepts in computational visual neuroscience. Recently, \citet{kubilius_cornet_2018} proposed CORnet as a more accurate model of human vision by modeling recurrent cortical pathways. Like CNN-F, they show CORnet has a larger V4 and IT neural similarity compared to a CNN with similar weights. \citet{linsley_learning_2018} suggests hGRU as another recurrent model of vision. Distinct from other models, hGRU models lateral pathways in the visual cortex to global contextual information. While Bayesian inference is a candidate for visual perception, a Bayesian framework is absent in these models. The recursive cortical network (RCN) proposes a hierarchal conditional random field as a model for visual perception~\cite{george_generative_2017}. In contrast to neural networks, RCN uses belief propagation for both training and inference. With the representational ability of neural networks, we propose CNN-F to approximate Bayesian inference with recurrent circuits in neural networks.

\paragraph{Feedback networks}
Feedback Network \cite{zamir2017feedback} uses convLSTM as building blocks and adds skip connections between different time steps. This architecture enables early prediction and enforces hierarchical structure in the label space. \citet{nayebi2018task} uses architecture search to design local recurrent cells and long range feedback to boost classification accuracy. \citet{wen2018deep} designs a bi-directional recurrent neural network by recursively performing bottom up and top down computations. The model achieves more accurate and definitive image classification. 
In addition to standard image classification, neural networks with feedback have been applied to other settings.
\citet{wang2018feedback} propose a feedback-based propagation approach that improves inference in CNN under partial evidence in the multi-label setting.  \citet{piekniewski2016unsupervised} apply multi-layer perceptrons with lateral and feedback connections to visual object tracking.
 
\paragraph{Combining top-down and bottom-up signals in RNNs}
\citet{mittal2020learning} proposes combining attention and modularity mechanisms to route bottom-up (feedforward) and top-down (feedback) signals. They extend the Recurrent Independent Mechanisms (RIMs) \cite{goyal2019recurrent} framework to a bidirectional structure such that each layer of the hierarchy can send information in both bottom-up direction and top-down direction. Our approach uses approximate Bayesian inference to provide top-down communication, which is more consistent with the Bayesian brain framework and predictive coding.

\paragraph{Inference in generative classifiers} \citet{sulam2019multi} derives a generative classifier using a sparse prior on the layer-wise representations. The inference is solved by a multi-layer basis pursuit algorithm, which can be implemented via recurrent convolutional neural networks. \citet{nimmagadda2015multi} propose to learn a latent tree model in the last layer for multi-object classification. A tree model allows for one-shot inference in contrast to iterative inference.

\paragraph{Target propagation}
The generative feedback in CNN-F shares a similar form as target propagation, where the targets at each layer are propagated backwards. In addition, difference target propagation uses auto-encoder like losses at intermediate layers to promote network invertibility \cite{meulemans2020theoretical, lee2015difference}. In the CNN-F, the intermediate reconstruction loss between adversarial and clean feature maps during adversarial training promotes the feedback to project perturbed image back to its clean version in all resolution scales.

\section{Conclusion}

Inspired by the recent studies in Bayesian brain hypothesis, we propose to introduce recurrent generative feedback to neural networks. We instantiate the framework on CNN and term the model as CNN-F. In the experiments, we demonstrate that the proposed feedback mechanism can considerably improve the adversarial robustness compared to conventional feedforward CNNs. We visualize the dynamical behavior of CNN-F and show its capability of restoring corrupted images. Our study shows that the generative feedback in CNN-F presents a biologically inspired architectural design that encodes inductive biases to benefit network robustness.
\section*{Broader Impacts}
Convolutional neural networks (CNNs) can achieve superhuman performance on image classification tasks. This advantage allows their deployment to computer vision applications such as medical imaging, security, and autonomous driving. However, CNNs trained on natural images tend to overfit to image textures. Such flaw can cause a CNN to fail against adversarial attacks and on distorted images. This may further lead to unreliable predictions potentially causing false medical diagnoses, traffic accidents, and false identification of criminal suspects. To address the robustness issues in CNNs, CNN-F adopts an architectural design which resembles human vision mechanisms in certain aspects. The deployment of CNN-F renders more robust AI systems. 

Despite the improved robustness, current method does not tackle other social and ethical issues intrinsic to a CNN. A CNN can imitate human biases in the image datasets. In automated surveillance, biased training datasets can improperly calibrate CNN-F systems to make incorrect decisions based on race, gender, and age. Furthermore, while robust, human-like computer vision systems can provide a net positive societal impact, there exists potential use cases with nefarious, unethical purposes. More human-like computer vision algorithms, for example, could circumvent human verification software. Motivated by these limitations, we encourage research into human bias in machine learning and security in computer vision algorithms. We also recommend researchers and policymakers examine how people abuse CNN models and mitigate their exploitation.

\subsubsection*{Acknowledgements}
We thank Chaowei Xiao, Haotao Wang, Jean Kossaifi, Francisco Luongo for the valuable feedback. Y. Huang is supported by DARPA LwLL grants. J. Gornet is supported by supported by the NIH Predoctoral Training in Quantitative Neuroscience 1T32NS105595-01A1. D. Y. Tsao is supported by Howard Hughes Medical Institute and Tianqiao and Chrissy Chen Institute for Neuroscience. A. Anandkumar is supported in part by Bren endowed chair, DARPA LwLL grants, Tianqiao and Chrissy Chen Institute for Neuroscience, Microsoft, Google, and Adobe faculty fellowships.

{
    \small
    \bibliographystyle{abbrvnat}
    \bibliography{refs.bib}
}

\newpage
\appendix
\section*{Appendix}
\section{Inference in the Deconvolutional Generative Model}\label{sec:detail_thm1}
\subsection{Generative model}
We choose the deconvolutional generative model (DGM) \cite{NRM} as the generative feedback in CNN-F. The graphical model of the DGM is shown in Figure \ref{fig:CNN-F_diagram} (middle). The DGM has the same architecture as CNN and generates images from high level to low level. Since low level features usually have higher dimension than high level features, the DGM introduces latent variables at each level to account for uncertainty in the generation process.

Let $y \in \mathbb{R}^K$ be label, $K$ is the number of classes. Let $x \in \mathbb{R}^n$ be image and $h \in \mathbb{R}^m$ be encoded features of $x$ after $k$ convolutional layers. In a DGM with $L$ layers in total, $g(\ell) \in \mathbb{R}^{C\times H \times W}$ denotes generated feature map at layer $\ell$, and $z(\ell) \in \mathbb{R}^{C\times H \times W}$ denotes latent variables at layer $\ell$. We use $z_R$ and $z_P$ to denote latent variables at a layer followed by $\operatorname{ReLU}$ and $\operatorname{MaxPool}$ respectively. In addition, we use $(\cdot)^{(i)}$ to denote the $i$th entry in a tensor.
Let $W(\ell)$ and $b(\ell)$ be the weight and bias parameters at layer $\ell$ in the DGM. We use $(\cdot)(\ast^\intercal)$ to denote deconvolutional transpose in deconvolutional layers and $(\cdot)^\intercal$ to denote matrix transpose in fully connected layers. In addition, we use $(\cdot)_{\uparrow}$ and $(\cdot)_{\downarrow}$ to denote upsampling and downsampling. The generation process in the DGM is as follows:

\begin{align} 
    \hardy &\sim p(\hardy)  \\
    g(L-1) &= W(L)^\intercal \hardy  \\
    z_P(L-1)^{(i)} &\sim \text{Ber}\left(\frac{e^{b(L-1) \cdot g(L-1)_{\uparrow}^{(i)}}}{e^{b(L-1) \cdot g(L-1)_{\uparrow}^{(i)}}+1}\right) \\
    g(L-2) &= W(L-1)(\ast^\intercal) \{g(L-1)_{\uparrow}\odot z_P(L-1)\} \\
    &\vdots \nonumber \\
    z_R(\ell)^{(i)} &\sim \text{Ber}\left(\frac{e^{b(\ell) \cdot g(\ell)^{(i)}}}{e^{b(\ell)\cdot g(\ell)^{(i)}} +1}\right)  \\
    g(\ell-1) &= W(\ell) (\ast^\intercal) \{ z_R(\ell) \odot g(\ell) \}  \\
    &\vdots \nonumber \\
    x &\sim \mathcal{N}(g(0), \text{diag}(\sigma^2)) 
    \label{eqn:dgm}
\end{align}{}

In the above generation process, we generate all the way to the image level. If we choose to stop at layer $k$ to generate image features $h$, the final generation step is $h \sim \mathcal{N}(g(k), \text{diag}(\sigma^2))$ instead of (\ref{eqn:dgm}).
The joint distribution of latent variables from layer $1$ to $L$ conditioning on $y$ is:

\begin{align} 
p (\{z(\ell)\}_{\ell=1:L} | \hardy) \nonumber 
&= p(z(L)|y) \Pi_{\ell=1}^{L-1} p(z(\ell)|\{z(k)\}_{k \geq \ell}, y) \\ 
&= \operatorname{Softmax}\left( \sum_{\ell=1}^{L} \langle b(\ell), z(\ell) \odot g(\ell)\rangle \right) 
\label{eqn:rpn}
\end{align}{}
where $\operatorname{Softmax}(\eta) = \frac{\exp (\eta)}{\sum_{\eta} \exp (\eta)}$ with $\eta = \sum_{\ell=1}^{L} \langle b(\ell), z(\ell) \odot g(\ell)\rangle$.

\subsection{Proof for Theorem \ref{thm:mainthm1}}
In this section, we provide proofs for Theorem \ref{thm:mainthm1}. In the proof, we use $f$ to denote the feedforward feature map after convolutional layer in the CNN of the same architecture as the DGM, and use $(\cdot)_a$ to denote layers after nonlinear operators. Let $v$ be the logits output from fully-connected layer of the CNN. Without loss of generality, we consider a DGM that has the following architecture. We list the corresponding feedforward feature maps on the left column:

\begin{align*}
    & \hspace{+4.5cm} g(0) = W(1) (\ast^\intercal) g_a(1) \\
    \text{Conv} \quad f(1) &= W(1) \ast x + b(1) \hspace{+1.5cm} g_a(1) = g(1) \odot z_R(1) \\
    \text{ReLU} \quad  f_a(1) &= \adarelu(f(1)) \hspace{+1.9cm} g(1) = W(2) (\ast^\intercal)g_a(2) \\
    \text{Conv} \quad f(2) &= W(2) \ast f_a(1) + b(2) \hspace{+0.92cm} g_a(2) = g(2)_{\uparrow} \odot z_P(2) \\
    \text{Pooling} \quad f_a(2) &= \adapool(f(2)) \hspace{+2.05cm} g(2)=W(3)^\intercal v \\
    \text{FC} \; \qquad v &=W(3)f_a(2)  &
\end{align*}

We prove Theorem \ref{thm:mainthm1} which states that CNN with $\adarelu$ and $\adapool$ is the generative classifier derived from the DGM by proving Lemma \ref{lem:lemma1} first.
\begin{definition}
$\adarelu$ and $\adapool$ are nonlinear operators that adaptively choose how to activate the feedforward feature map based on the sign of the feedback feature map.
\begin{equation} \label{eqn:adaact}
\adarelu(f) = 
\begin{cases}
\relu(f), \quad\text{if } g \geq 0 \\
\relu(-f), \quad\text{if } g<0
\end{cases}
\quad
\adapool(f) = 
\begin{cases}
\pool(f), \quad\text{if } g \geq 0 \\
-\pool(-f), \quad\text{if } g<0
\end{cases}
\end{equation}
\end{definition}

\begin{definition}[generative classifier]
Let $v$ be the logits output of a CNN, and $p(x,y,z)$ be the joint distribution specified by a generative model. A CNN is a generative classifier of a generative model if $\operatorname{Softmax}(v)=p(y|x,z)$.
\end{definition}

\begin{lemma} 
\label{lem:lemma1}
Let $y$ be the label and $x$ be the image. $v$ is the logits output of the CNN that has the same architecture and parameters as the DGM. $g(0)$ is the generated image from the DGM. $\alpha$ is a constant. $\eta(y,z)=\sum_{\ell=1}^{L} \langle b(\ell), z(\ell) \odot g(\ell)\rangle$. Then we have:

\begin{align}
    \alpha y^\intercal v = g(0)^\intercal x + \eta(y,z)
\end{align}{}
\end{lemma}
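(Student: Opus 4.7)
The strategy is to unroll $y^\intercal v$ through the CNN from the output end and, in parallel, unroll $g(0)^\intercal x$ through the DGM from the image end, matching the two telescopes layer by layer. The central tool is a pair of layer-wise adjoint identities: at each nonlinearity the MAP-value of the latent variable $z(\ell)$ lets me rewrite $g(\ell)^\intercal\sigma(f(\ell))$ as $g_a(\ell)^\intercal f(\ell)$ with $g_a(\ell)=z(\ell)\odot g(\ell)$, and at each weight layer I use the bilinearity of convolution together with its transpose $(\ast^\intercal)$ to push the feedback signal down to the next feedforward feature map, shedding a bias term $\langle b(\ell),g_a(\ell)\rangle$ on the way. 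Summing those bias terms over $\ell$ will produce exactly $\eta(y,z)=\sum_\ell\langle b(\ell),z(\ell)\odot g(\ell)\rangle$.

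Concretely I would first rewrite the top fully connected layer as $y^\intercal v = y^\intercal W(3)f_a(2) = (W(3)^\intercal y)^\intercal f_a(2) = g(2)^\intercal f_a(2)$. For the pooling layer I invoke the MAP form of $z_P(2)$ from Proposition~\ref{thm:mainthm2}.B, which one-hots the argmax of $f(2)$ when $g(2)\ge 0$ and the argmin when $g(2)<0$; this gives $g(2)^\intercal\adapool(f(2)) = (g(2)_{\uparrow}\odot z_P(2))^\intercal f(2) = g_a(2)^\intercal f(2)$. Substituting $f(2)=W(2)\ast f_a(1)+b(2)$ and using $\langle g_a(2),W(2)\ast f_a(1)\rangle = \langle W(2)(\ast^\intercal)g_a(2),f_a(1)\rangle = g(1)^\intercal f_a(1)$ peels off one conv layer and contributes $\langle b(2),g_a(2)\rangle$ to $\eta$. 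The same pattern repeats for $\adarelu$: the MAP form of $z_R(1)$ gives $g(1)^\intercal\adarelu(f(1)) = (z_R(1)\odot g(1))^\intercal f(1) = g_a(1)^\intercal f(1)$, after which $f(1)=W(1)\ast x+b(1)$ and the adjoint of convolution yield $g_a(1)^\intercal f(1)=g(0)^\intercal x+\langle b(1),g_a(1)\rangle$ since $g(0)=W(1)(\ast^\intercal)g_a(1)$. Collecting bias terms recovers $\eta(y,z)$, and the scalar $\alpha$ absorbs a normalization (e.g.\ $1/\sigma^2$) that surfaces when this identity is used to compare $y^\intercal v$ to $\log p(y\mid h,z)$ in Theorem~\ref{thm:mainthm1}.

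\paragraph{Anticipated obstacle.} The delicate part is the sign bookkeeping in the $\adarelu$ step, in the case where both $g$ and $f$ are negative: under Eq.~\eqref{eqn:adaoperator} one has $g\cdot\adarelu(f)=g\cdot\relu(-f)=-gf$, whereas $z_R\cdot g\cdot f=gf$, so the two sides of the adjoint identity agree only once the MAP definition of $z_R$ in Eq.~\eqref{eqn:mainlatentr} is used to match sign conventions between feedforward and feedback maps; I expect to need to rewrite both quantities in terms of $|f|$ and $|g|$ and then recombine, leaning on Assumption~\ref{ass:map}.A to keep the absolute-value factors from leaking into the final answer. A secondary nuisance is the resolution mismatch in the pooling identity, since $z_P(2)$ and $g(2)_{\uparrow}$ live on the fine grid while $g(2)$ lives on the coarse one, so care is required to ensure that the upsampling operator $(\cdot)_{\uparrow}$ is the correct adjoint of the pool-argmax selection. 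Once these two conventions are pinned down, every remaining manipulation is a routine application of bilinearity and of the convolutional adjoint, and the telescoping cleanly assembles the claimed identity $\alpha y^\intercal v = g(0)^\intercal x+\eta(y,z)$.
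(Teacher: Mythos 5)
Your proposal is essentially the paper's own argument: the same layer-wise telescoping that combines the conv/deconv (and FC) adjoint identities with the latent-masked forms $z_R \odot f$ and $z_P \odot g_{\uparrow}$ standing in for $\adarelu$/$\adapool$, shedding the bias terms that sum to $\eta(y,z)$ — merely run from $y^\intercal v$ downward instead of from $g(0)^\intercal x + \eta(y,z)$ upward as in the paper, and you even make explicit the two conventions (the $g<0,\,f<0$ sign case and the pooling up/downsampling adjoint) that the paper's chain uses implicitly. The only small discrepancy is your attribution of $\alpha$: in the paper it is the constant absorbed at the $(\cdot)_{\uparrow}/(\cdot)_{\downarrow}$ exchange in the pooling step inside Lemma \ref{lem:lemma1}, not the $1/\sigma^2$ normalization, which only enters later in the proof of Theorem \ref{thm:mainthm1}.
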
{}

\begin{proof}
\begin{align*}
& g(0)^\intercal x + \eta(y,z) \\
=& \{W(1)(\ast^\intercal) \{g(1) \odot z_R(1)\}\}^\intercal x + (z_R(1) \odot g(1))^\intercal b(1) + (z_P(2) \odot g(2)_{\uparrow})^\intercal b(2) \\
=& (z_R(1) \odot g(1))^\intercal \{ W(1) (\ast ^\intercal) x + b(1) \} + (z_P(2) \odot g(2)_{\uparrow})^\intercal b(2) \\
=& g(1)^\intercal (z_R(1) \odot f(1)) + (z_P(2) \odot g(2)_{\uparrow})^\intercal b(2) \\
=& \{W(2)(\ast ^\intercal)\{g(2)_{\uparrow} \odot z_P(2)\}\}^\intercal (z_R(1) \odot f(1)) + (z_P(2) \odot g(2)_{\uparrow})^\intercal b(2) \\
=& \{g(2)_{\uparrow} \odot z_P(2)\}^\intercal \{ W(2) \ast (z_R(1) \odot f(1)) + b(2) \}\\
=& (W(3)^\intercal y)_{\uparrow}^\intercal \{z_P(2) \odot f(2) \} \\
=& \alpha (W(3)^\intercal y)^\intercal (z_P(2) \odot f(2))_{\downarrow} \\
=& \alpha y^\intercal W(3) (z_P(2) \odot f(2))_{\downarrow} \\
=& \alpha y^\intercal v
\end{align*}{}
\end{proof}{}

\begin{remark}
Lemma \ref{lem:lemma1} shows that logits output from the corresponding CNN of the DGM is proportional to the inner product of generated image and input image plus $\eta(y,z)$. Recall from Equation (\ref{eqn:dgm}), since the DGM assumes $x$ to follow a Gaussian distribution centered at $g(0)$, the inner product between $g(0)$ and $x$ is related to $\log p(x|y,z)$. Recall from Equation (\ref{eqn:rpn}) that conditionoal distribution of latent variables in the DGM is parameterized by $\eta(y,z)$. Using these insights, we can use Lemma \ref{lem:lemma1} to show that CNN performs Bayesian inference in the DGM. \\
In the proof, the fully-connected layer applies a linear transformation to the input without any bias added. For fully-connected layer with bias term, we modify $\eta(y,z)$ to $\eta'(y,z)$: 
\begin{align*}
    \eta'(y,z) = \eta(y,z) + y^\intercal b(3)
\end{align*}{}
The logits are computed by 
\begin{align*}
    v = W(3)(f(2) \odot z(2)) + b(3)
\end{align*}{}
Following a very similar proof as of Lemma \ref{lem:lemma1}, we can show that
\begin{align} \label{lem:modify1}
    \alpha y^\intercal v = g^\intercal (0) + \eta'(y,z)
\end{align}{}
\end{remark}{}

With Lemma \ref{lem:lemma1}, we can prove Theorem \ref{thm:mainthm1}. Here, we repeat the theorem and the assumptions on which it is defined:

\textbf{Assumption} \ref{ass:map}. (Constancy assumption in the DGM) 

\setlength{\leftskip}{2em}
\textbf{A.} The generated image $g(k)$ at layer $k$ of DGM satisfies $||g(k)||_2^2=\text{const.}$ \\
\textbf{B.} Prior distribution on the label is a uniform distribution: $p(y) = \text{const.}$  \\
\textbf{C.} Normalization factor in $p(z|y)$ for each category is constant: $\sum_z e^{\eta(y,z)}=\text{const.}$
\setlength{\leftskip}{0pt}

\textbf{Theorem} \ref{thm:mainthm1}. Under Assumption \ref{ass:map}, and given a joint distribution $p(h,y,z)$ modeled by the DGM, $p(y|h,z)$ has the same parametric form as a CNN with $\adarelu$ and $\adapool$. 

\begin{proof}
Without loss of generality, assume that we generate images at a pixel level. In this case, $h=x$. We use $p(x,y,z)$ to denote the joint distribution specified by the DGM. In addition, we use $q(y|x,z)$ to denote the Softmax output from the CNN, i.e. $q(y|x,z) = \frac{y^\intercal e^{v}}{\sum_{i=1}^K e^{v^{(i)}}}.$ To simplify the notation, we use $z$ instead of $\{z(\ell)\}_{\ell=1:L}$ to denote latent variables across layers.

\begin{align*}
    & \log p(y|x,z) \\
   =& \log p(y,x,z) - \log p(x,z) \\
   =& \log p(x|y,z) + \log p(z|y) + \log p(y) - \log p(x,z) \\
   =& \log p(x|y,z) + \log p(z|y) + \const & (*) \\
   =& -\frac{1}{2\sigma^2}||x-g(0)||_2^2 + \log \operatorname{Softmax}(\eta(y,z)) + \const \\
   =& \frac{1}{\sigma^2} g(0)^\intercal x + \log \operatorname{Softmax}(\eta(y,z)) + \const & \text{(Assumption \ref{ass:map}.A)}\\
   =& \frac{1}{\sigma^2} g(0)^\intercal x + \log \frac{e^{\eta(y,z)}}{\sum_z e^{\eta(y,z)}} + \const \\
   =& \frac{1}{\sigma^2} g(0)^\intercal x + \eta(y,z) + \const & \text{(Assumption \ref{ass:map}.C)} \\
   =& \alpha y^\intercal v + \const & \text{(Lemma \ref{lem:lemma1})} \\
   =& \alpha (\log q(y|x,z) + \log \sum_{i=1}^K e^{v^{(i)}} ) + \const \\
   =& \alpha \log q(y|x,z)  + \const & (**)
\end{align*}

We obtain line $(*)$ for the following reasons: $\log p(y) = \const$ according to Assumption \ref{ass:map}.B, and $\log p(x,z) = \const$ because only $y$ is variable, $x$ and $z$ are given. We obtained line $(**)$ because given $x$ and $z$, the logits output are fixed. Therefore, $\log \sum_{i=1}^K e^{v^{(i)}}=\const.$
Take exponential on both sides of the above equation, we have:

\begin{align}
\label{eqn:pq}
    p(y|x,z) = \beta q(y|x,z)
\end{align}\\
where $\beta$ is a scale factor. Since both $q(y|x,z)$ and $p(y|x,z)$ are distributions, we have $\sum_y p(y|x,z)=1$ and $\sum_y q(y|x,z)=1$. Summing over $y$ on both sides of Equation (\ref{eqn:pq}), we have $\beta=1.$ Therefore, we have $q(y|x,z)=p(y|x,z)$.
\end{proof}{}

We have proved that CNN with $\adarelu$ and $\adapool$ is the generative classifier derived from the DGM that generates to layer $0$. In fact, we can extend the results to all intermediate layers in the DGM with the following additional assumptions:
\begin{assumption}
\label{ass:allconstnorm}
Each generated layer in the DGM has a constant $\ell_2$ norm: $||g(\ell)||_2^2=\text{const.}, \ell=1,\dots,L.$ 
\end{assumption}
\begin{assumption}
\label{ass:allconsteta}
Normalization factor in $p(z|y)$ up to each layer is constant: $\sum_z e^{\eta(y,\{z(j)\}_{j=\ell:L})}=\text{const.}, \ell=1,\dots,L.$ 
\end{assumption}

\begin{corollary} 
Under Assumptions \ref{ass:allconstnorm}, \ref{ass:allconsteta} and \ref{ass:uniform}.B,
$p(y|f(\ell),\{z(j)\}_{j=\ell:L})$ in the DGM has the same parametric form as a CNN with $\adarelu$ and $\adapool$ starting at layer $\ell$.
\end{corollary}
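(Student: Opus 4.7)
The plan is to mirror the proof of Theorem \ref{thm:mainthm1} on a \emph{truncated} DGM: instead of viewing the generative model as producing the pixel-level image $x$ with $x\sim \mathcal{N}(g(0), \operatorname{diag}(\sigma^2))$, we stop the generation at layer $\ell$, so the feature map at layer $\ell$ is emitted as $f(\ell)\sim \mathcal{N}(g(\ell),\operatorname{diag}(\sigma^2))$, exactly as in the remark following equation~\eqref{eqn:dgm} about generating $h$ at layer $k$. Correspondingly, the latent variables only range over layers $\ell,\ldots,L$, and the joint prior $p(\{z(j)\}_{j=\ell:L}\mid y)$ has the same softmax form as~\eqref{eqn:rpn} but with $\eta$ replaced by the partial sum $\eta_\ell(y,z)=\sum_{j=\ell}^{L}\langle b(j), z(j)\odot g(j)\rangle$.

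The first step is to establish a truncated analog of Lemma \ref{lem:lemma1}:
\begin{equation*}
    \alpha\, y^\intercal v_\ell \;=\; g(\ell)^\intercal f(\ell) \;+\; \eta_\ell(y,\{z(j)\}_{j=\ell:L}),
\end{equation*}
where $v_\ell$ denotes the logits of the CNN that takes $f(\ell)$ as input at layer $\ell$ and propagates upward through $\adarelu$/$\adapool$ and the remaining convolutional/fully-connected layers. The proof proceeds by the same telescoping identity used in Lemma \ref{lem:lemma1}: at each layer $j\ge \ell$, substituting the DGM recursion $g(j-1)=W(j)(\ast^\intercal)\{z(j)\odot g(j)\}$ on the left and the CNN recursion $f(j+1)=W(j+1)\ast (z(j)\odot f(j))+b(j+1)$ on the right lets the inner product collapse one layer at a time, until we reach the top layer and a single inner product $\alpha y^\intercal W(L)(z(L)\odot f(L))_{\downarrow}=\alpha y^\intercal v_\ell$ remains.

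Once this identity is in hand, the second step is to repeat the Bayesian chain of equalities from the proof of Theorem~\ref{thm:mainthm1}. Writing $z_{\ge \ell}:=\{z(j)\}_{j=\ell:L}$, one expands
\begin{equation*}
    \log p(y\mid f(\ell), z_{\ge \ell}) \;=\; \log p(f(\ell)\mid y,z_{\ge \ell})+\log p(z_{\ge \ell}\mid y)+\log p(y)-\log p(f(\ell), z_{\ge \ell}).
\end{equation*}
Assumption \ref{ass:uniform}.B zeros out $\log p(y)$ up to a constant and the last term is constant in $y$; the Gaussian emission at layer $\ell$ together with Assumption \ref{ass:allconstnorm} reduces $\log p(f(\ell)\mid y,z_{\ge \ell})$ to $\sigma^{-2}g(\ell)^\intercal f(\ell)+\const$; and Assumption \ref{ass:allconsteta} turns $\log p(z_{\ge \ell}\mid y)$ into $\eta_\ell(y,z_{\ge\ell})+\const$. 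Summing, the right-hand side equals $\alpha y^\intercal v_\ell +\const$ by the truncated lemma, and normalizing over $y$ eliminates the constant to give $p(y\mid f(\ell),z_{\ge\ell})=\operatorname{Softmax}(v_\ell)$.

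The main obstacle I expect is bookkeeping rather than conceptual difficulty: one must verify that the telescoping step in the truncated Lemma still closes exactly at the boundary layer $\ell$—in particular, that the boundary term coming from $\adarelu$/$\adapool$ at layer $\ell$ correctly matches $z(\ell)\odot f(\ell)$ on the CNN side with $z(\ell)\odot g(\ell)$ on the DGM side, so that no stray $b(\ell)$ term is left unabsorbed. Once that layer-$\ell$ boundary condition is checked (either by redefining the CNN at layer $\ell$ to consume $f(\ell)$ directly without an extra nonlinearity, or by absorbing the corresponding $\langle b(\ell), z(\ell)\odot g(\ell)\rangle$ into $\eta_\ell$), the argument goes through verbatim.
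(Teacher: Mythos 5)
Your proposal is correct and is essentially the paper's intended argument: the corollary is stated without its own proof as a direct extension of Theorem \ref{thm:mainthm1}, and your truncated analog of Lemma \ref{lem:lemma1} (with emission $f(\ell)\sim\mathcal{N}(g(\ell),\operatorname{diag}(\sigma^2))$, partial sum $\eta_\ell$, and the same Bayes-rule chain using Assumptions \ref{ass:allconstnorm}, \ref{ass:allconsteta} and \ref{ass:map}.B) is exactly that extension. The boundary bookkeeping you flag is indeed the only delicate point—the layer-$\ell$ prior term $\langle b(\ell), z(\ell)\odot g(\ell)\rangle$ has no layer-$\ell$ convolution bias to pair with once the network starts from $f(\ell)$—and your fix (absorbing it into $\eta_\ell$, i.e., letting the truncated CNN consume $f(\ell)$ directly so the masked inner product $(z(\ell)\odot g(\ell))^\intercal f(\ell)$ plays the role that $(z_R(1)\odot g(1))^\intercal f(1)$ plays in Lemma \ref{lem:lemma1}) is the same device the paper uses at layer $1$.
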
{}

\subsection{Proof for Proposition \ref{thm:mainthm2}.B}
In this section, we provide proofs for Proposition \ref{thm:mainthm2}.B. In the proof, we inherit the notations that we use for proving Theorem \ref{thm:mainthm1}. 
Without loss of generality, we consider a DGM that has the same architecture as the one we use to prove Theorem \ref{thm:mainthm1}.

\textbf{Proposition} \ref{thm:mainthm2}.B.  Under Assumption \ref{ass:map}, MAP estimate of $z(\ell)$ conditioned on $h, y$ and $\{z(j)\}_{j \neq \ell}$ in the DGM is:

\begin{align} 
\zmap_{R}(\ell) &= \mathbbm{1}{(\adarelu(f(\ell)) \geq 0)}  \\
\zmap_{P}(\ell) &= \mathbbm{1}{(g(\ell) \geq 0)}\odot \argmax_{r\times r} (f(\ell)) 
 + \mathbbm{1}{(g(\ell)<0)}\odot \argmin_{r\times r} (f(\ell)) 
\end{align}

\begin{proof}
Without loss of generality, assume that we generate images at a pixel level. In this case, $h=x$. Then we have
\begin{align*}
    & \argmax_{z(\ell)} \log p(z(\ell)| \{z(j)\}_{j \neq \ell}, x,y) \\
   =& \argmax_{z(\ell)} \log p(\{z(j)\}_{j=1:L},x,y) \\
   =& \argmax_{z(\ell)} \log p(x|y,\{z(j)\}_{j=1:L}) + \log p(\{z(j)\}_{j=1:L}|y) + \log p(y) \\
   =& \argmax_{z(\ell)} \log p(x|y,\{z(j)\}_{j=1:L}) + \eta(y,z) + \const &\text{(Assumption \ref{ass:map}.C and \ref{ass:map}.B)} \\
   =& \argmax_{z(\ell)} \frac{1}{\sigma^2} g(0)^\intercal x + \eta(y,z) + \const &\text{(Assumption \ref{ass:map}.A)} 
\end{align*}
Using Lemma \ref{lem:lemma1}, the MAP estimate of $z_R(\ell)$ is:
\begin{align*}
    \zmap_{R}(\ell) &= \argmax_{z_R(\ell)} (z_R(\ell) \odot g(\ell))^\intercal f(\ell) \\
                     &= \mathbbm{1}{(\adarelu(f(\ell)) \geq 0)}
\end{align*}
The MAP estimate of $z_P(\ell)$ is:
\begin{align*}
    \zmap_{P}(\ell) &= \argmax_{z_P(\ell)} (z_P(\ell) \odot g(\ell)_{\uparrow})^\intercal f(\ell) \\
                    &= \mathbbm{1}{(g(\ell) \geq 0)}\odot \argmax_{r\times r} (f(\ell)) 
 + \mathbbm{1}{(g(\ell)<0)}\odot \argmin_{r\times r} (f(\ell))
\end{align*}
\end{proof}

\subsection{Incorporating instance normalization in the DGM}
\label{sec:insnorm}
Inspired by the constant norm assumptions (Assumptions \ref{ass:map}.A and \ref{ass:allconstnorm}), we incorporate instance normalization into the DGM. We use $\norm{(\cdot)} = \frac{(\cdot)}{||\cdot||_2}$ to denote instance normalization, and $(\cdot)_n$ to denote layers after instance normalization. In this section, we prove that with instance normalization, CNN is still the generative classifier derived from the DGM.
Without loss of generality, we consider a DGM that has the following architecture. We list the corresponding feedforward feature maps on the left column:
\begin{align*}
    & \qquad \qquad \qquad \qquad \qquad \qquad \: \! g(0) = W(1) (\ast^\intercal) g_a(1) \\
    \text{Conv} \quad f(1) &= W(1) \ast \norm{x} \qquad \qquad \qquad \; \: g_a(1) = g_n(1) \odot z_R(1) \\
    \text{Norm} \quad f_n(1)&=\norm{f(1)} \qquad \qquad \qquad \qquad \; \: g_n(1)=\norm{g(1)} \\
    \text{ReLU} \quad  f_a(1) &= \adarelu(f_n(1) + b(1)) \quad \; \;  \! g(1) = W(2) (\ast^\intercal)g_a(2) \\
    \text{Conv} \quad f(2) &= W(2) \ast f_a(1) \qquad \qquad \; \; \; \: g_a(2) = g_n(2)_{\uparrow} \odot z_P(2) \\
    \text{Norm} \quad f_n(2)&=\norm{f(2)} \qquad \qquad \qquad \qquad \; \; \: g_n(2)=\norm{g(2)} \\
    \text{Pooling} \quad f_a(2) &= \adapool(f_n(2)+b(2)) \quad \; \; \; \: g(2)=W(3)^\intercal v \\
    \text{FC} \quad \qquad v &=W(3)f_a(2)  &
\end{align*}
\begin{assumption}
\label{ass:insnorm1}
Feedforward feature maps and feedback feature maps have the same $\ell_2$ norm:
\begin{align*}
||g(\ell)||_2 &= ||f(\ell)||_2 , \ell=1,\dots,L \\
||g(0)||_2 &= ||x||_2
\end{align*}{}
\end{assumption}
\begin{lemma}
\label{lem:inslemma}
Let $y$ be the label and $x$ be the image. $v$ is the logits output of the CNN that has the same architecture and parameters as the DGM. $g(0)$ is the generated image from the DGM, and $\norm{g(0)}$ is normalized $g(0)$ by $\ell_2$ norm. $\alpha$ is a constant. $\eta(y,z)=\sum_{\ell=1}^{L} \langle b(\ell), z(\ell) \odot g(\ell)\rangle$. Then we have
\begin{align}
    \alpha y^\intercal v = \norm{g(0)}^\intercal x + \eta(y,z)
\end{align}{}
\end{lemma}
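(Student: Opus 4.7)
The plan is to mirror the proof of Lemma \ref{lem:lemma1}, with one additional algebraic ingredient: a \emph{normalization swap} licensed by Assumption \ref{ass:insnorm1}. Specifically, whenever $\|u\|_2=\|v\|_2$, one has $\overline{u}^\intercal v = u^\intercal \overline{v} = u^\intercal v/\|u\|_2$, so the normalization bar can be transferred from a feedback quantity to the corresponding feedforward quantity at will. I will apply this identity at every layer where the unnormalized proof wrote $g(\ell)^\intercal f(\ell)$ with no bars.

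The opening move is to use $\|g(0)\|_2=\|x\|_2$ to rewrite $\overline{g(0)}^\intercal x$ as $g(0)^\intercal \overline{x}$. Substituting $g(0)=W(1)(\ast^\intercal)\bigl(g_n(1)\odot z_R(1)\bigr)$ and transferring the deconvolution across the inner product gives $\bigl(g_n(1)\odot z_R(1)\bigr)^\intercal f(1)$, where $f(1)=W(1)\ast\overline{x}$ is exactly the first feedforward feature of the normalized network.

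The recursion then proceeds layer by layer. Pull out the Hadamard mask to write $g_n(1)^\intercal(z_R(1)\odot f(1))$, and apply the normalization swap with $\|g(1)\|_2=\|f(1)\|_2$ to get $g(1)^\intercal(z_R(1)\odot f_n(1))$. Absorb the $\langle b(1),z_R(1)\odot g(1)\rangle$ term from $\eta(y,z)$ using $\langle b(1),z_R(1)\odot g(1)\rangle = g(1)^\intercal(z_R(1)\odot b(1))$, which folds everything into $g(1)^\intercal\bigl(z_R(1)\odot(f_n(1)+b(1))\bigr)$. Next substitute $g(1)=W(2)(\ast^\intercal)(g_n(2)_{\uparrow}\odot z_P(2))$ and repeat the same three-step move: transfer the deconvolution, swap normalization with $\|g(2)\|_2=\|f(2)\|_2$, and absorb the $b(2)$ term. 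Finally the fully connected substitution $g(2)=W(3)^\intercal v/\alpha$, combined with the $W(3)$ transpose, collects the expression into $\alpha y^\intercal v$, exactly as in the last three lines of the Lemma \ref{lem:lemma1} calculation.

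The main obstacle, and really the only step beyond Lemma \ref{lem:lemma1}, is checking that the scalar factor $1/\|f(\ell)\|_2$ commutes with the Hadamard mask so that $z_R(\ell)\odot f_n(\ell) = \bigl(z_R(\ell)\odot f(\ell)\bigr)/\|f(\ell)\|_2$ (and likewise for $z_P$ with the upsample). This is elementary but must be verified at each layer so that the normalization swap can legitimately be pushed through the masked inner products. Once these swaps are in hand, the rest of the calculation is a faithful transcription of the Lemma \ref{lem:lemma1} argument with $f(\ell)$ replaced by $f_n(\ell)$ in the appropriate positions, and the downstream use of this lemma (to establish the CNN-as-generative-classifier identity for the normalized DGM) then goes through verbatim.
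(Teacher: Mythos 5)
Your proof is correct and follows the paper's argument essentially verbatim: the same layer-by-layer peeling as in Lemma \ref{lem:lemma1}, with Assumption \ref{ass:insnorm1} invoked at exactly the same points to transfer the normalization from the feedback map to the corresponding feedforward map (first $\|g(0)\|_2=\|x\|_2$ to move the bar from $\norm{g(0)}$ onto $x$, then $\|g(\ell)\|_2=\|f(\ell)\|_2$ to turn $g_n(\ell)^\intercal(z(\ell)\odot f(\ell))$ into $g(\ell)^\intercal(z(\ell)\odot f_n(\ell))$), while absorbing the bias terms of $\eta(y,z)$ layer by layer. The only small slip is your attribution of $\alpha$: in the paper it arises from the upsampling/downsampling adjoint step $(W(3)^\intercal y)_{\uparrow}^\intercal w = \alpha\,(W(3)^\intercal y)^\intercal w_{\downarrow}$ rather than from a substitution $g(2)=W(3)^\intercal v/\alpha$, but since you defer the final collection to the last lines of the Lemma \ref{lem:lemma1} calculation, this does not affect the argument.
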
{}

\begin{proof}
\begin{align*}
& \norm{g(0)}^\intercal x + \eta(y,z) \\
=& \{W(1)(\ast^\intercal) \{g_n(1) \odot z_R(1)\}\}^\intercal \frac{x}{||g(0)||_2} + (z_R(1) \odot g(1))^\intercal b(1) + (z_P(2) \odot g(2)_{\uparrow})^\intercal b(2) \\
=& (z_R(1) \odot g_n(1))^\intercal \{ W(1) (\ast ^\intercal) \norm{x} \} + (z_R(1) \odot g(1))^\intercal b(1) + (z_P(2) \odot g(2)_{\uparrow})^\intercal b(2) &\text{(Assumption \ref{ass:insnorm1})} \\
=& g(1)^\intercal \{z_R(1) \odot (f_n(1) + b(1))\} + (z_P(2) \odot g(2)_{\uparrow})^\intercal b(2) &\text{(Assumption \ref{ass:insnorm1})}\\
=& \{W(2)(\ast ^\intercal)\{g_n(2)_{\uparrow} \odot z_P(2)\}\}^\intercal (z_R(1) \odot f(1)) + (z_P(2) \odot g(2)_{\uparrow})^\intercal b(2) \\
=& \{g_n(2)_{\uparrow} \odot z_P(2)\}^\intercal \{ W(2) \ast (z_R(1) \odot f(1)) \} + (z_P(2) \odot g(2)_{\uparrow})^\intercal b(2) \\
=& g(2)^\intercal \{z_P(2) \odot (f_n(2) + b(2))\} &\text{(Assumption \ref{ass:insnorm1})}\\
=& (W(3)^\intercal y)_{\uparrow}^\intercal \{z_P(2) \odot f(2) \} \\
=& \alpha (W(3)^\intercal y)^\intercal (z_P(2) \odot f(2))_{\downarrow} \\
=& \alpha y^\intercal W(3) (z_P(2) \odot f(2))_{\downarrow} \\
=& \alpha y^\intercal v
\end{align*}{}
\end{proof}{}

\begin{theorem}
\label{thm:insthm}
Under Assumptions \ref{ass:insnorm1} and \ref{ass:map}.B and \ref{ass:map}.C, 
and given a joint distribution $p(h,y,z)$ modeled by the DGM with instance normalization, $p(y|h,z)$ has the same parametric form as a CNN with $\adarelu$, $\adapool$ and instance normalization. 
\end{theorem}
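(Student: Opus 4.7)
The plan is to follow the structure of the proof of Theorem \ref{thm:mainthm1} essentially verbatim, substituting Lemma \ref{lem:inslemma} for Lemma \ref{lem:lemma1} and relying on Assumption \ref{ass:insnorm1} in place of Assumption \ref{ass:map}.A. Starting from Bayes' rule, I would write $\log p(y|x,z) = \log p(x|y,z) + \log p(z|y) + \log p(y) - \log p(x,z)$. By Assumption \ref{ass:map}.B the term $\log p(y)$ is a constant, and since we condition on $x$ and $z$, the term $\log p(x,z)$ is constant in $y$; both are absorbed into $\const$. The factor $\log p(z|y)$ is handled exactly as in the original proof: it equals $\log \operatorname{Softmax}(\eta(y,z))$ by the DGM form, and Assumption \ref{ass:map}.C collapses this to $\eta(y,z) + \const$.

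The one genuinely new step is the Gaussian log-likelihood. Expanding $-\frac{1}{2\sigma^2}||x-g(0)||_2^2$ gives $\frac{1}{\sigma^2} g(0)^\intercal x - \frac{1}{2\sigma^2}(||x||_2^2 + ||g(0)||_2^2)$. In Theorem \ref{thm:mainthm1} the bracketed piece was constant by Assumption \ref{ass:map}.A; here the corresponding fact is Assumption \ref{ass:insnorm1}, which forces $||g(0)||_2 = ||x||_2$, so both squared norms depend only on the conditioning variable $x$ and are therefore constant in $y$. I would then rewrite the inner product as $g(0)^\intercal x = ||g(0)||_2 \cdot \norm{g(0)}^\intercal x = ||x||_2 \cdot \norm{g(0)}^\intercal x$, a positive-in-$y$-constant multiple of the normalized inner product that Lemma \ref{lem:inslemma} actually produces.

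With those reductions in place, invoking Lemma \ref{lem:inslemma} yields $\log p(y|x,z) = \alpha\, y^\intercal v + \const$. The final paragraph of the Theorem \ref{thm:mainthm1} proof then transfers unchanged: exponentiate, observe that $\log \sum_i e^{v^{(i)}}$ is constant in $y$ (the logits are fixed once $x,z$ are fixed), match against $q(y|x,z)$ to get $p(y|x,z) \propto q(y|x,z)$, and note that both sides sum to one over $y$ to conclude equality. Since $q$ is by definition the softmax output of the CNN with $\adarelu$, $\adapool$, and instance normalization whose architecture is tabulated alongside the DGM in this section, the claim follows. The main obstacle I expect is purely bookkeeping: one must check that the conversion from $g(0)^\intercal x$ to $\norm{g(0)}^\intercal x$ and the expansion of $||x - g(0)||_2^2$ together introduce only scalars that are constant in $y$, which is exactly what Assumption \ref{ass:insnorm1} delivers and which lets Lemma \ref{lem:inslemma} be plugged in unmodified.
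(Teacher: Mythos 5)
Your proposal is correct and matches the paper's approach exactly: the paper omits the detailed argument, stating only that it is "very similar to that of Theorem \ref{thm:mainthm1} using Lemma \ref{lem:inslemma}," and your write-up supplies precisely those details, with the only new ingredient being Assumption \ref{ass:insnorm1} to control $\|g(0)\|_2$ and convert $g(0)^\intercal x$ into the normalized inner product appearing in Lemma \ref{lem:inslemma}. The leftover $y$-independent scalar multiplying the inner product (your $\|x\|_2/\sigma^2$) is handled with the same level of looseness as the $1/\sigma^2$ factor in the paper's own proof of Theorem \ref{thm:mainthm1}, so your argument is at parity with the original.
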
{}
\begin{proof}
The proof of Theorem \ref{thm:insthm} is very similar to that of Theorem \ref{thm:mainthm1} using Lemma \ref{lem:inslemma}. Therefore, we omit the detailed proof here.
\end{proof}
\begin{remark}
The instance normalization that we incorporate into the DGM is not the same as the instance normalization that is typically used in image stylization \cite{ulyanov2016instance}. The conventional instance normalization computes output $y$ from input $x$ as $y=\frac{x-\mu(x)}{\sigma(x)}$, where $\mu$ and $\sigma$ stands for mean and standard deviation respectively. Our instance normalization does not subtract the mean of the input and divides the input by its $\ell_2$ norm to make it have constant $\ell_2$ norm.
\end{remark}

\subsection{CNN-F on ResNet}
We can show that CNN-F can be applied to ResNet architecture following similar proofs as above. When there is a skipping connection in the forward pass in ResNet, we also add a skipping connection in the generative feedback. CNN-F on CNN with and without skipping connections are shown in Figure \ref{fig:resarch}.
\begin{figure}
    \centering
    \includegraphics[width=0.6\textwidth]{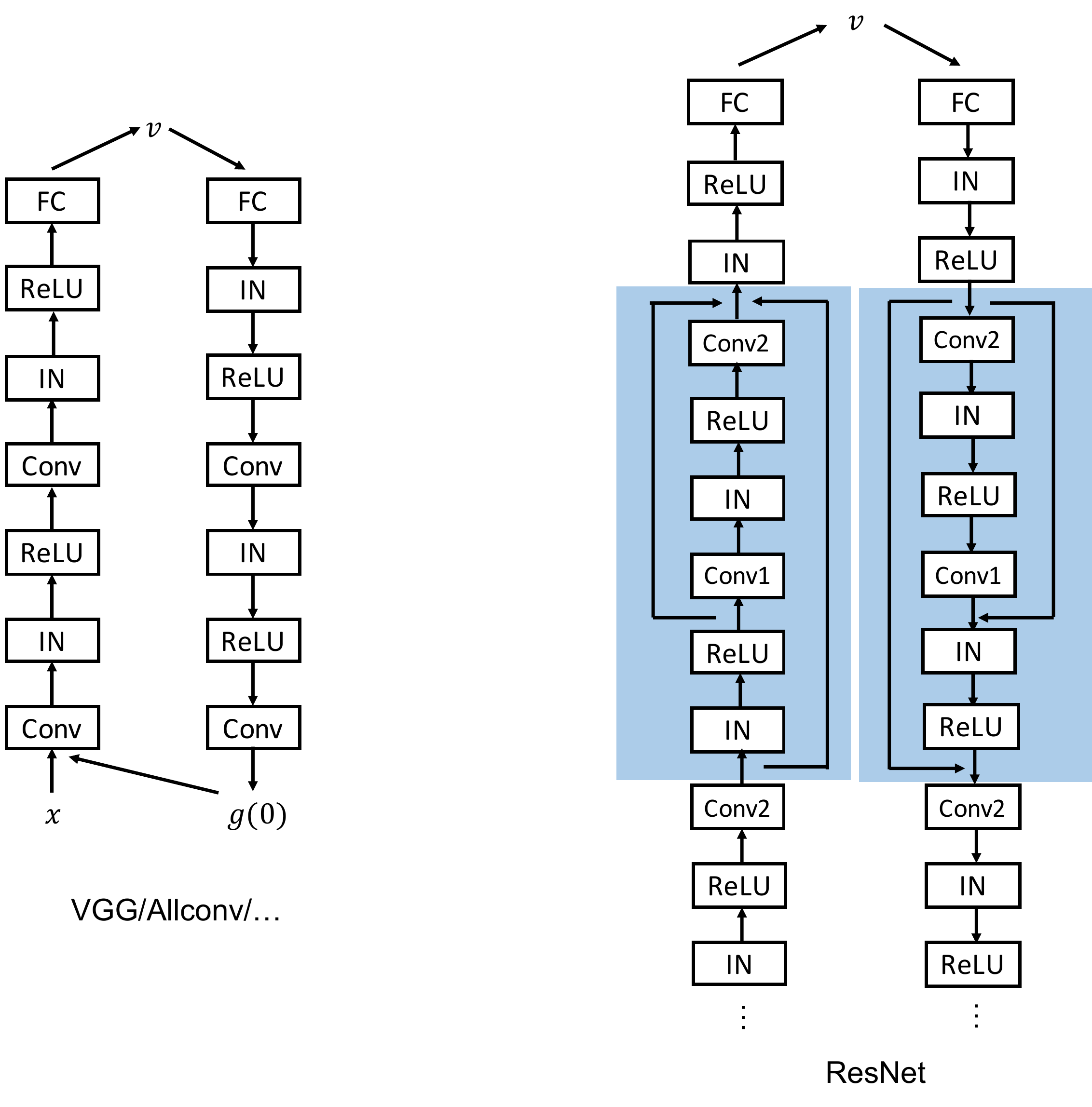}
    \caption{CNN-F on CNN with and without skipping connections.}
    \label{fig:resarch}
\end{figure}

\section{Additional experiment details} 
\subsection{Standard training on Fashion-MNIST} \label{sec:adv_detail}
\paragraph{Experimental setup}
We use the following architecture to train CNN-F: Conv2d(32, 3$\times$ 3), Instancenorm, AdaReLU, Conv2d(64, 3$\times$ 3), Instancenorm, AdaPool, Reshape, FC(128), AdaReLU, FC(10). The instance normalization layer we use is described in Appendix \ref{sec:insnorm}. All the images are scaled between $[-1,+1]$ before training. We train both CNN and CNN-F with Adam \cite{kingma2014adam}, with weight decay of $0.0005$ and learning rate of $0.001$. \\
We train both CNN and CNN-F for 30 epochs using cross-entropy loss and reconstruction loss at pixel level as listed in Table \ref{tbl:losses}. The coefficient of cross-entropy loss and reconstruction loss is set to be 1.0 and 0.1 respectively.
We use the projected gradient descent (PGD) method to generate adversarial samples within $L_{\infty}$-norm constraint, and denote the maximum $L_{\infty}$-norm between adversarial images and clean images as $\epsilon$. The step size in PGD attack is set to be $0.02$. Since we preprocess images to be within range $[-1, +1]$, the values of $\epsilon$ that we report in this paper are half of their actual values to show a relative perturbation strength with respect to range $[0,1]$. 

\paragraph{Adversarial accuracy against end-to-end attack}
Figure \ref{fig:adv_ete} shows the results of end-to-end (e2e) attack. CNN-F-5 significantly improves the robustness of CNN.
Since attacking the first forward pass is more effective than end-to-end attack, we report the adversarial robustness against the former attack in the main text. 
There are two reasons for the degraded the effectiveness of end-to-end attack. 
Since $\adarelu$ and $\adapool$ in the CNN-F are non-differentiable, we need to approximate the gradient during back propagation in the end-to-end attack. 
Furthermore, to perform the end-to-end attack, we need to back propagate through unrolled CNN-F, which is $k$ times deeper than the corresponding CNN, where $k$ is the number of iterations during evaluation. 
\begin{figure}[ht]
     \centering
     \begin{subfigure}[b]{0.45\textwidth}
         \centering
         \includegraphics[width=\textwidth]{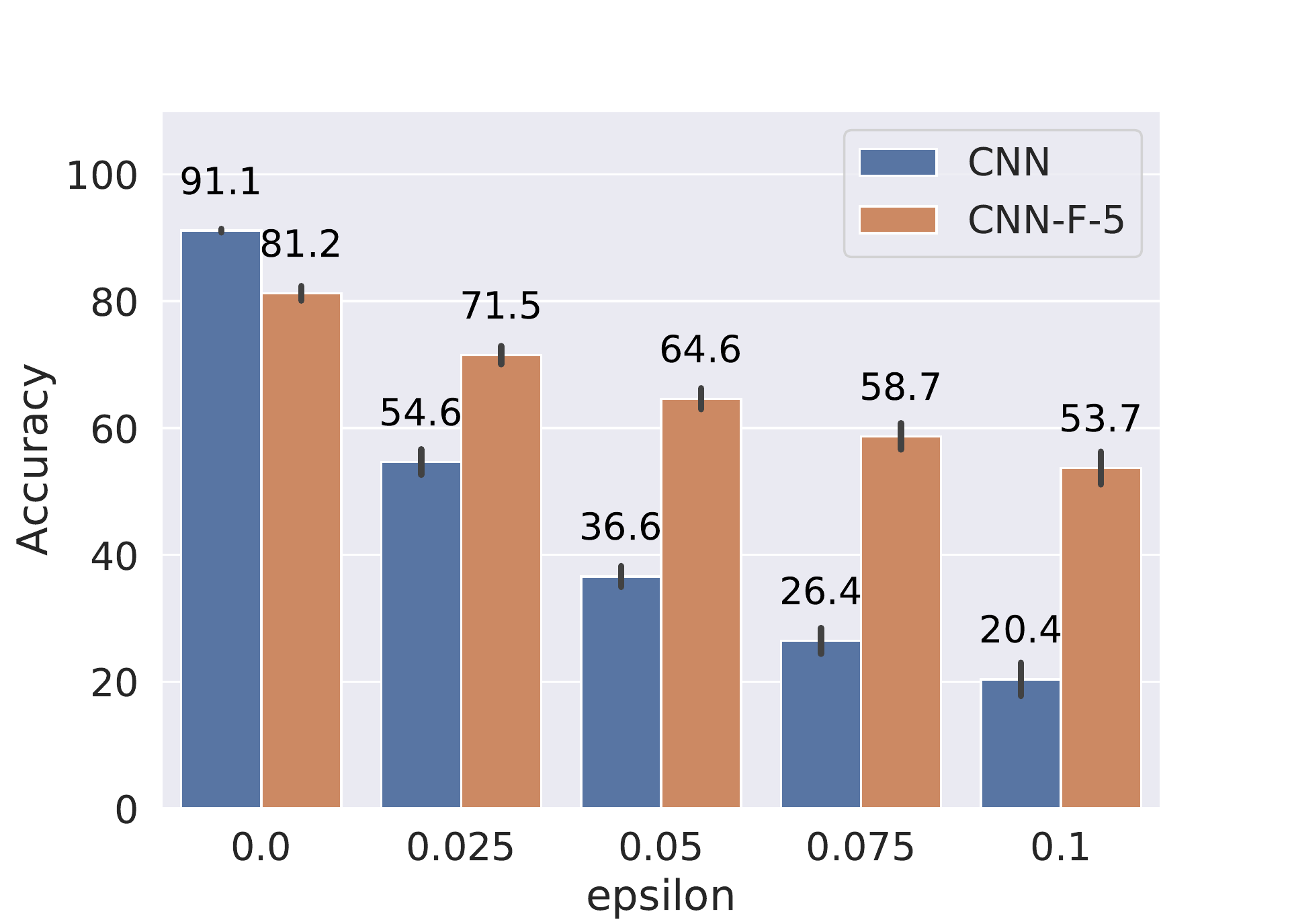}
         \caption{Standard training. Testing w/ FGSM.}
         \label{fig:st_fgsm_ete}
     \end{subfigure}
     \hfill
     \begin{subfigure}[b]{0.45\textwidth}
         \centering
         \includegraphics[width=\textwidth]{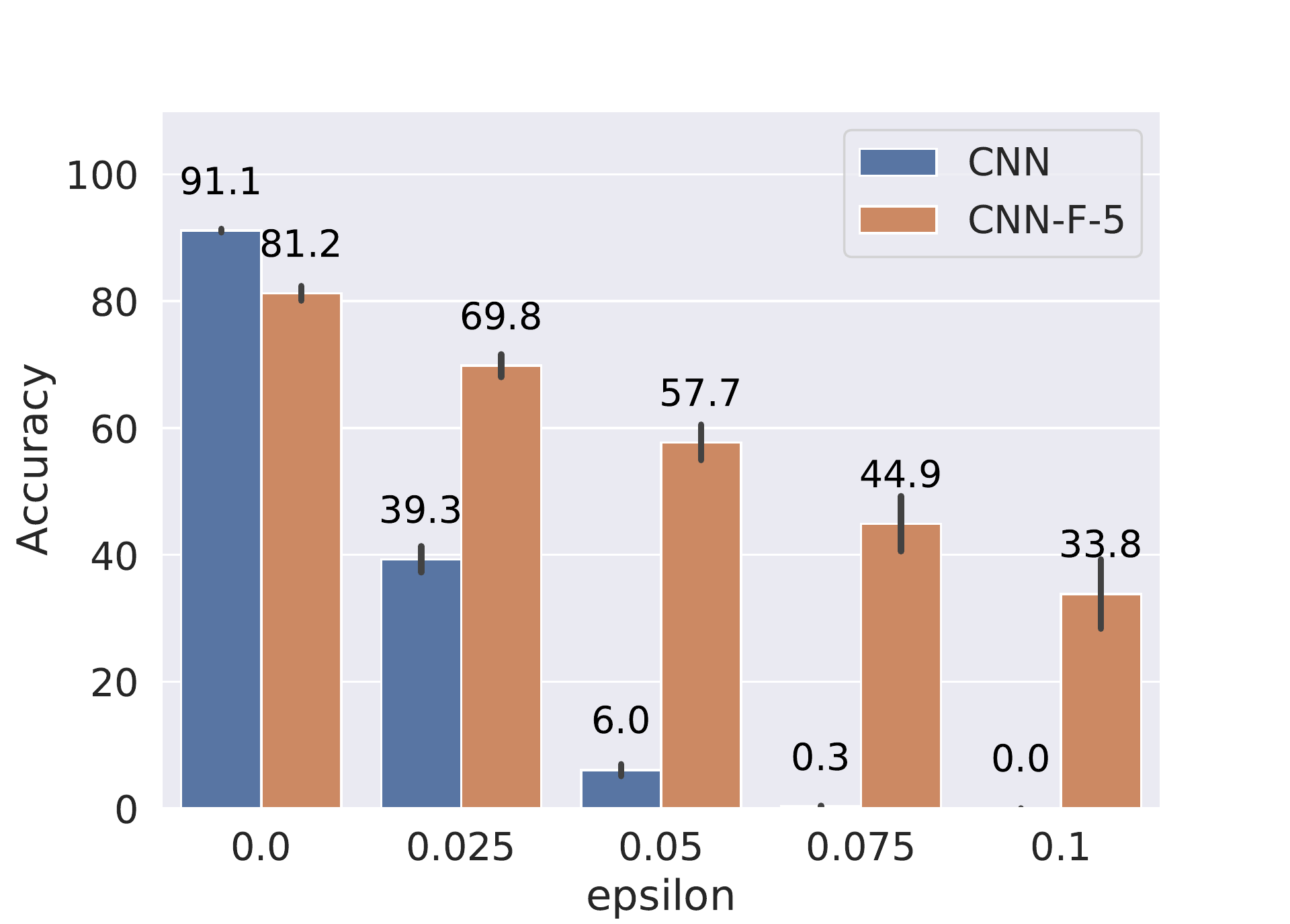}
         \caption{Standard training. Testing w/ PGD-40.}
         \label{fig:st_pgd40_ete}
     \end{subfigure}

        \caption{\textbf{Adversarial robustness on Fashion-MNIST against end-to-end attack.} CNN-F-$k$ stands for CNN-F trained with $k$ iterations; PGD-$c$ stands for a PGD attack with $c$ steps. CNN-F achieves higher accuracy on MNIST than CNN for under both standard training and adversarial training. Each accuracy is averaged over 4 runs and the error bar indicates standard deviation. }
        \label{fig:adv_ete}
\vspace{-0.6cm}
\end{figure}

\subsection{Adversarial training} \label{sec:adv-abla-cifar}
\paragraph{Fashion-MNIST}
On Fashion-MNIST, we use the following architecture:
Conv2d(16, 1$\times$ 1), Instancenorm, AdaReLU, Conv2d(32, 3$\times$ 3), Instancenorm, AdaReLU, Conv2d(32, 3$\times$ 3), Instancenorm, AdaReLU, AdaPool(2$\times$ 2), Conv2d(64, 3$\times$ 3), Instancenorm, AdaReLU, AdaPool(2$\times$ 2), Reshape, FC(1000), AdaReLU, FC(128), AdaReLU, FC(10). The intermediate reconstruction losses are added at the two layers before AdaPool. 
The coefficients of adversarial sample cross-entropy losses and reconstruction losses are set to 1.0 and 0.1, respectively.
We scaled the input images to [-1,+1]. We trained with PGD-7 attack with step size 0.071.
We report half of the actual $\epsilon$ values in the paper to show a relative perturbation strength with respect to range $[0,1]$. 
To train the models, we use SGD optimizer with learning rate of 0.05, weight decay of $0.0005$, momentum of 0.9 and gradient clipping with magnitude of 0.5. The batch size is set to be 256. We use polynomial learning rate scheduling with power of 0.9. 
We trained the CNN-F models with one iteration for 200 epochs using the following hyper-parameters: online update step size 0.1, ind 2 (using two convolutional layers to encode images to feature space), clean coefficients 0.1.

\paragraph{CIFAR-10}
On CIFAR-10, we use Wide ResNet (WRN) \cite{zagoruyko2016wide} with depth 40 and width 2. The WRN-40-2 architecture consists of 3 network blocks, and each of them consists of 3 basic blocks with 2 convolutional layers. The intermediate reconstruction losses are added at the layers after every network block. 
The coefficients of adversarial sample cross-entropy losses and reconstruction losses are set to 1.0 and 0.1, respectively.
We scaled the input images to [-1,+1]. We trained with PGD-7 attack with step size 0.02.
We report half of the actual $\epsilon$ values in the paper to show a relative perturbation strength with respect to range $[0,1]$. 
To train the models, we use SGD optimizer with learning rate of 0.05, weight decay of $0.0005$, momentum of 0.9 and gradient clipping with magnitude of 0.5. The batch size is set to be 256. We use polynomial learning rate scheduling with power of 0.9. 
We trained the models for 500 epochs with 2 iterations.
For the results in Table \ref{tbl:adv-cifar}, we trained the models using the following hyper-parameters: online update step size 0.1, ind 5, clean coefficients 0.05. 
In addition, we perform an ablation study on the influence of hyper-parameters.
\paragraph{Which layer to reconstruct to?}
The feature space to reconstruct to in the generative feedback influences the robustness performance of CNN-F. Table \ref{tbl:adv-cifar-ind} list the adversarial accuracy of CNN-F with different ind configuration, where ``ind'' stands for the index of the basic block we reconstruct to in the first network block. For instance, ind=3 means that we use all the convolutional layers before and including the third basic block to encode the input image to the feature space. Note that CNN-F models are trained with two iterations, online update step size 0.1, and clean cross-entropy loss coefficient 0.05. 

\begin{table*}[h]
\caption{Adversarial accuracy on CIFAR-10 over 3 runs. $\epsilon=8/255$.}\label{tbl:adv-cifar-ind}
\centering
\resizebox{\linewidth}{!}{
\begin{small}
\begin{tabular}{cccccccc}
 \toprule
 & Clean & PGD (first) & PGD (e2e) & SPSA (first) & SPSA (e2e) & Transfer & Min \\
 \midrule
 CNN-F (ind=3, last) & $78.94 \pm 0.16$ & $46.03 \pm 0.43$ & $60.48 \pm 0.66$ & $68.43 \pm 0.45$ & $64.14 \pm 0.99$ & $65.01 \pm 0.65$ & $46.03 \pm 0.43$ \\
 CNN-F (ind=4, last) & $78.69 \pm 0.57$ & $47.97 \pm 0.65$ & $56.40 \pm 2.37$ & $69.90 \pm 2.04$ & $58.75 \pm 3.80$ & $65.53 \pm 0.85$ & $47.97 \pm 0.65$ \\
 CNN-F (ind=5, last) & $78.68 \pm 1.33$ & $\mathbf{48.90 \pm 1.30}$ & $49.35 \pm 2.55$ & $68.75 \pm 1.90$ & $51.46 \pm 3.22$ & $66.19 \pm 1.37$ & $\mathbf{48.90 \pm 1.30}$ \\
 \midrule
 CNN-F (ind=3, avg) & $79.89 \pm 0.26$ & $45.61 \pm 0.33$ & $\mathbf{67.44 \pm 0.31}$ & $68.75 \pm 0.66$ & $\mathbf{70.15 \pm 2.21}$ & $64.85 \pm 0.22$ & $45.61 \pm 0.33$ \\
 CNN-F (ind=4, avg) & $80.07 \pm 0.52$ & $47.03 \pm 0.52$ & $63.59 \pm 1.62$ & $70.42 \pm 1.42$ & $65.63 \pm 1.09$ & $65.92 \pm 0.91$ & $47.03 \pm 0.52$ \\
 CNN-F (ind=5, avg) & $\mathbf{80.27 \pm 0.69}$ & $48.72 \pm 0.64$ & $55.02 \pm 1.91$ & $\mathbf{71.56 \pm 2.03}$ & $58.83 \pm 3.72$ & $\mathbf{67.09 \pm 0.68}$ & $48.72 \pm 0.64$ \\
  \bottomrule
  \end{tabular}
\end{small}
}
\end{table*}

\paragraph{Cross-entropy loss coefficient on clean images}
We find that a larger coefficient of the cross-entropy loss on clean images tends to produce better end-to-end attack accuracy even though sacrificing the first attack accuracy a bit (Table \ref{tbl:adv-cifar-cc}). When the attacker does not have access to intermediate output of CNN-F, only end-to-end attack is possible. Therefore, one may prefer models with higher accuracy against end-to-end attack. We use cc to denote the coefficients on clean cross-entropy. Note that CNN-F models are trained with one iteration, online update step size 0.1, ind 5. 
\begin{table*}[h]
\caption{Adversarial accuracy on CIFAR-10 over 3 runs. $\epsilon=8/255$.}\label{tbl:adv-cifar-cc}
\centering
\resizebox{\linewidth}{!}{
\begin{small}
\begin{tabular}{cccccccc}
 \toprule
 & Clean & PGD (first) & PGD (e2e) & SPSA (first) & SPSA (e2e) & Transfer & Min \\
 \midrule
 CNN-F (cc=0.5, last) & $82.14 \pm 0.07$ & $45.98 \pm 0.79$ & $59.16 \pm 0.99$ & $72.13 \pm 2.99$ & $59.53 \pm 2.92$ & $67.27 \pm 0.35$ & $45.98 \pm 0.79$ \\
 CNN-F (cc=0.1, last) & $78.19 \pm 0.60$ & $47.65 \pm 1.72$ & $56.93 \pm 9.20$ & $66.51 \pm 1.10$ & $61.25 \pm 4.23$ & $64.93 \pm 0.70$ & $47.65 \pm 1.72$ \\
 CNN-F (cc=0.05, last) & $78.68 \pm 1.33$ & $\mathbf{48.90 \pm 1.30}$ & $49.35 \pm 2.55$ & $68.75 \pm 1.90$ & $51.46 \pm 3.22$ & $66.19 \pm 1.37$ & $\mathbf{48.90 \pm 1.30}$ \\
 \midrule
 CNN-F (cc=0.5, avg) & $\mathbf{83.15 \pm 0.29}$ & $44.60 \pm 0.53$ & $\mathbf{68.76 \pm 1.04}$ & $\mathbf{72.34 \pm 3.54}$ & $\mathbf{68.80 \pm 1.18}$ & $\mathbf{67.53 \pm 0.48}$ & $44.60 \pm 0.53$ \\
 CNN-F (cc=0.1, avg) & $80.06 \pm 0.65$ & $46.77 \pm 1.38$ & $63.43 \pm 7.77$ & $69.11 \pm 0.77$ & $66.25 \pm 4.40$ & $65.56 \pm 1.08$ & $46.77 \pm 1.38$ \\
 CNN-F (cc=0.05, avg) & $80.27 \pm 0.69$ & $48.72 \pm 0.64$ & $55.02 \pm 1.91$ & $71.56 \pm 2.03$ & $58.83 \pm 3.72$ & $67.09 \pm 0.68$ & $48.72 \pm 0.64$ \\
  \bottomrule
  \end{tabular}
\end{small}
}
\end{table*}

\addtolength{\floatsep}{-0.5cm}
\addtolength{\abovecaptionskip}{-0.5cm}
\addtolength{\belowcaptionskip}{-0.5cm}
\addtolength{\abovedisplayskip}{-0.5cm}
\addtolength{\belowdisplayskip}{-0.5cm}

\end{document}